\newcommand{\tabincell}[2]{\begin{tabular}{@{}#1@{}}#2\end{tabular}}
\newcommand{\G}{\mathcal{G}}
\newcommand{\N}{\mathcal{N}}
\newcommand{\R}{\mathbb{R}}
\newcommand{\E}{\mathbb{E}}
\newcommand{\V}{{\mathcal{V}}}
\newtheorem{prop}{Proposition}
\newtheorem{theorem}{Theorem}
\newtheorem{definition}{Definition}
\newlength\myindent
\def\BibTeX{{\rm B\kern-.05em{\sc i\kern-.025em b}\kern-.08em
    T\kern-.1667em\lower.7ex\hbox{E}\kern-.125emX}}
\title{Learning Robust Representations with Graph Denoising Policy Network}
\begin{document}

\author{
Lu Wang$^1$, Wenchao Yu$^{2*}$, Wei Wang$^3$, Wei Cheng$^2$,  Wei Zhang$^1$, Hongyuan Zha$^4$, Xiaofeng He$^{1*}$, Haifeng Chen$^2$\\
$^1$East China Normal University, $^2$NEC Laboratories America, Inc.,\\ $^3$University of California Los Angeles, $^4$Georgia Institute of Technology
\\
\{luwang,xfhe\}@stu.ecnu.edu.cn, \{wyu,weicheng,haifeng\}@nec-labs.com,\\weiwang@cs.ucla.edu, zhangwei.thu2011@gmail.com,	zha@cc.gatech.edu 
\thanks{$^*$Corresponding authors.}
}
\maketitle


\begin{abstract}
Graph representation learning, aiming to learn low-dimensional representations which capture the geometric dependencies between nodes in the original graph, has gained increasing popularity in a variety of graph analysis tasks, including node classification and link prediction. 
Existing representation learning methods based on graph neural networks and their variants rely on the aggregation of neighborhood information, which makes it sensitive to noises in the graph,
e.g. erroneous links between nodes, incorrect/missing node features.
In this paper, we propose Graph Denoising Policy Network (short for GDPNet) to learn robust representations from noisy graph data through reinforcement learning. GDPNet first selects \emph{signal} neighborhoods for each node, and then aggregates the information from the selected neighborhoods to learn node representations for the down-stream tasks. Specifically, in the \emph{signal neighborhood selection} phase, GDPNet optimizes the neighborhood for each target node by formulating the process of removing noisy neighborhoods as a Markov decision process and learning a policy with task-specific rewards received from the \emph{representation learning} phase. In the \emph{representation learning} phase, GDPNet aggregates features from signal neighbors to generate node representations for down-stream tasks, and provides task-specific rewards to the \emph{signal neighbor selection} phase. These two phases are jointly trained to select optimal sets of neighbors for target nodes with maximum cumulative task-specific rewards, and to learn robust representations for nodes. 
Note that GDPNet is naturally an \emph{inductive} model which can leverage both graph structure and the associated node feature information to efficiently generate representations for unseen nodes.
Experimental results on node classification task demonstrate the effectiveness of GDNet, outperforming the state-of-the-art graph representation learning methods on several well-studied datasets. Additionally, we show that, with a carefully designed reward function, GDPNet is mathematically equivalent to solving the submodular maximizing problem, which theoretically guarantees the best approximation to the optimal solution with GDPNet.

\end{abstract}

\begin{IEEEkeywords}
graph representation learning, graph neural networks, graph embedding, reinforcement learning
\end{IEEEkeywords}

\section{Introduction}
Recently, remarkable progress has been made toward graph representation learning, a.k.a graph/network embedding, which solves the graph analytics problem by mapping nodes in a graph to low-dimensional vector representations while effectively preserving the graph structure~\cite{hamilton2017representation,cai2018comprehensive,cui2018survey,yu2018learning}. Graph neural networks (GNNs) have been widely applied in graph analysis due to the ground-breaking performance with deep architectures and recent advances in optimization techniques~\cite{scarselli2008graph,zhou2018graph}. Existing representation learning methods based on GNNs, e.g. GraphSAGE~\cite{hamilton2017inductive}, Graph Convolution Networks (GCNs)~\cite{kipf2016semi,chen2018fastgcn} and Graph Attention Networks (GATs)~\cite{velivckovic2017graph}, rely on the aggregation of neighborhood information, which makes the model vulnerable to noises in the input graph. 

Some examples of such noises are as follows:
\begin{itemize}
	\item In knowledge graphs or open information extraction systems, spurious information may produce erroneous links between nodes. Likewise, incomplete information may lead to missing links.  
	\item In task-driven graph analysis, mislabeled samples, or cross-class links can be viewed as noises in node classification task.
	\item Node features such as user profiles in social networks are often missing, or filled with obsolete or incorrect values. 
\end{itemize}

\begin{figure}
\centerline{\includegraphics[width=.88\linewidth]{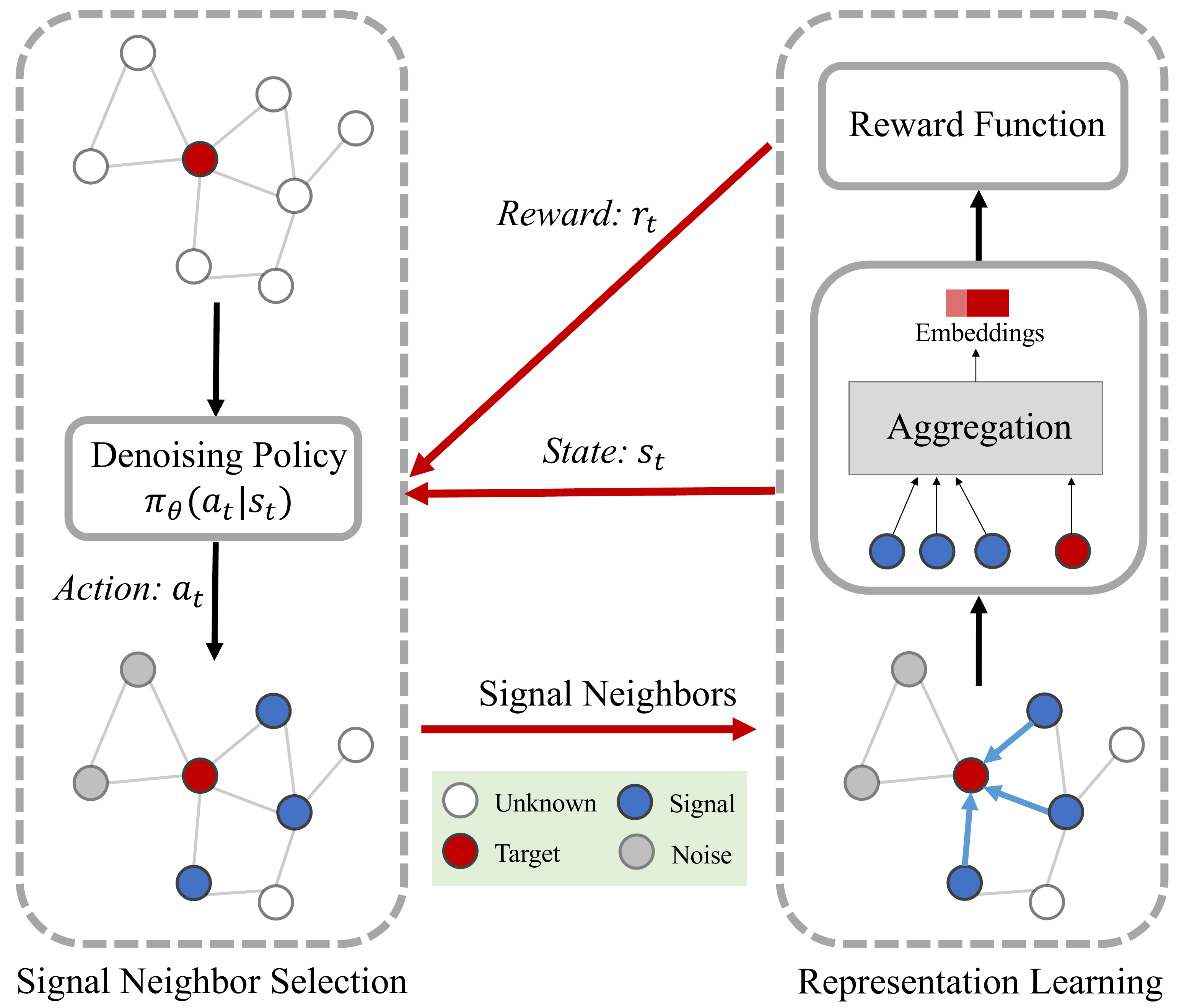}}
\caption{The framework of GDPNet with one-hop neighborhood aggregation}
\label{fig:model}
\end{figure}

\begin{figure*}
\centerline{\includegraphics[width=.95\linewidth]{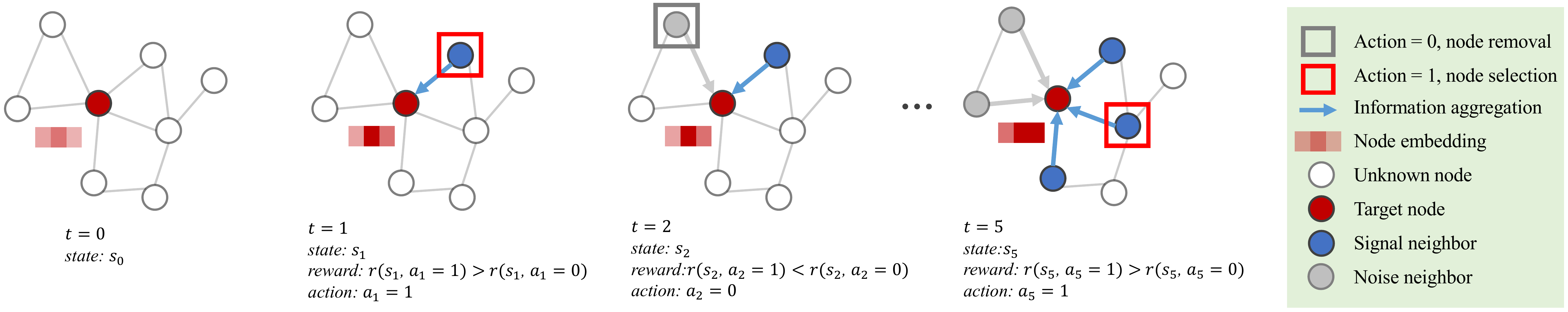}}
\caption{Illustration of the GDPNet model from the view of signal neighbor selection}
\label{fig:demo}
\end{figure*}

Good graph representations are expected to be robust to the  erroneous links, mislabeled nodes and partial corrupted features in the input graph, and capture geometric dependencies among nodes in the graph. However existing approaches have limited efforts on robustness study in this regard. 
In order to overcome this limitation of graph representation learning in handling noisy graph data, we propose Graph Denoising Policy Network, denoted as GDPNet, to learn robust representations through reinforcement learning. GDPNet includes two phases: \emph{signal neighbor selection} and \emph{representation learning}. It first selects \emph{signal} neighbors for each node, and then aggregates the information from the selected neighbors to learn node representations with respect to the down-stream tasks. 

The major challenge here is on how to train these two phases jointly, particularly when the model has no explicit knowledge about where the noise might be. 
We address this challenge by formulating the graph denoising process as a Markov decision process. Intuitively, although we do not have an explicit supervision for the signal neighbor selection, we can measure the performance of the representations learned with the selected neighbors on tasks like node classification, then the task-specific rewards received from the representation learning phase can be used for trial-and-error-search. In the \emph{signal neighbor selection} phase, as shown in Fig.~\ref{fig:demo}, GDPNet optimizes the neighborhood for each node by formulating the process of removing the noisy neighbors as a Markov decision process and learning a policy with the task-specific rewards received from the \emph{representation learning} phase.
In the \emph{representation learning} phase, GDPNet aggregates features from signal neighbors to generate node representations for down-stream tasks, and provides task-specific rewards to the \emph{signal neighbor selection} phase. 
In the \emph{representation learning} phase, GDPNet trains a set of aggregator functions that accumulate feature information from the selected signal neighbors of each target node. Thus in the test time, the representations of unseen nodes can be generated with the trained GDPNet with graph structure and the associated node feature information. The task-specific rewards computed w.r.t the down-stream tasks are passed to the \emph{signal neighbor selection} phase. These two phases are jointly trained to select optimal sets of neighbors for target nodes with maximum cumulative task-specific rewards, and to learn robust representations for nodes. 

We evaluate GDPNet on node classification benchmark, which tests GDPNet's ability to generate useful representations on unseen data. Experimental results show that GDPNet outperforms state-of-the-art graph representation learning baselines on several well-studied datasets by a large margin, which demonstrates the effectiveness of our approach.  
In summary, our contributions in this work include:
\begin{itemize}
	\item We propose a novel model, GDPNet, for robust graph representation learning through reinforcement learning. GDPNet consists of two phrases, namely \emph{signal neighbor selection} and \emph{representation learning}, which enables GDPNet to effectively learn node representations from noisy graph data.
	\item We formulate signal neighbor selection as a reinforcement learning problem, which enables the model to perform graph denoising just with weak supervision from the task-specific reward signals.
	\item GDPNet is able to generate representations for unseen nodes in an \emph{inductive} fashion, which leverages both graph structure and the associated node feature information.
	
	\item GDPNet is proved to be mathematically equivalent to solving the submodular maximizing problem, which guarantees our model can be bounded w.r.t the optimal solution.
\end{itemize}

The rest of this work is organized as follows. Section~\ref{sec:pre} reviews the preliminaries of graph neural networks and reinforcement learning. Section~\ref{sec:method} formally defines the graph representation learning problem through reinforcement learning, along with the description of GDPNet. Section~\ref{sec:submodular} discusses the relations of GDPNet to submodular maximization problem. Section~\ref{sec:results} evaluates GDPNet on node classification tasks using various real-world graph data. Section~\ref{sec:related} briefly surveys related work on graph representation learning and reinforcement learning on graph, followed by the conclusion in Section~\ref{sec:con}.

\section{Preliminaries}\label{sec:pre}

\subsection{Graph Neural Network}
Graph neural networks (GNNs) are utilized to encode nodes in a low-dimensional space so that similarity in this space approximates similarity in the original graph~\cite{zhou2018graph}. GNNs generate node representations based on local neighborhoods, that is, by aggregating information from their neighbors using neural networks. Mathematically, the basic GNN can be formulated as follows,
\begin{align}
	h_v^0 &= x_v \\
	h_v^k &= \sigma\left(W_k \sum_{u\in \N(v)} \frac{h_u^{k-1}}{|\N(v)|} + B_k h_v^{k-1}\right), \forall k > 0
\end{align}
where $\N(v)$ is the neighborhood set of node $v$, $x_v$ represents node feature vector, $h_v^k$ represents $k^\textit{th}$ layer embedding of node $v$, $\sigma$ is the non-linear activation function (e.g. ReLU or tanh), $W_k$ and $B_k$ are the parameters to be learned.

\subsection{Reinforcement Learning}
The goal of reinforcement learning is to learn a policy which can obtain maximum cumulative reward by making multi-step decisions in a Markov decision process. Policy gradient is the main approach to solve reinforcement learning problems which directly optimizes this policy via calculating the gradient of the cumulative reward and making gradient ascent. Specifically, the policy is modeled as $\pi_\theta(a\vert s)$.
To estimate the parameter $\theta$ of $\pi_\theta$, policy gradient methods maximize the expected cumulative reward
from the start state $s_1$ to the end of the decision process $s_T$. The objective function of reinforcement learning is defined as follows: 
\begin{equation}
 J(\theta) = \mathbb{E}_{\tau\sim \pi_\theta}\left[\sum_{t=1}^Tr_t\right] \approx \frac{1}{N}\sum_{i=1}^N\sum_{t=1}^T r_t  
\end{equation}
where $\tau$ is the trajectories generated by $\pi_\theta$ which consists of state $s_t$, action $a_t$ and reward $r_t$, $N$ is the number of trajectory samples, $T$ is the length of the decision process.
Policy gradient optimizes the parameter $\theta$ via gradient ascent where the gradient $\nabla_\theta J(\theta)$ is calculated by the \emph{policy gradient theorem}~\cite{sutton2000policy}:
\begin{equation}\label{pg}
\nabla_\theta J(\theta) =  \mathbb{E}_{\tau\sim \pi_\theta}\left[\left(\sum_{t=1}^T\nabla_\theta \log\pi_\theta\left(a_t\vert s_t\right)\right)\sum_{t=1}^T r_t\right]
\end{equation}

\section{Approach} \label{sec:method}




We formulate the robust graph representation learning problem as sequentially selecting an optimal set of neighbors for each node with maximum cumulative reward signals and aggregating features from nodes' optimal neighborhoods. In this part, we formally define the problem, the environment setting for signal neighbor selection, and the GDPNet model.

\subsection{Problem Formulation}
Given an attributed graph $\G=\{\mathcal{E},\mathcal{V}, X\}$, where $\mathcal{E}$ is the edge set and $\V$ is the node set. $X\in \mathbb{R}^{|\mathcal{V}|\times D}$ collects the attribute information for each node where $x_v\in \mathbb{R}^D$ is a $D$-dimensional attribute vector of node $v\in \V$. Note that we can simply use one-hot encoding for node features for a graph without attributes. Given a target node $v$, let $\mathcal{N}(v) = \{u_{1}, u_{2},..., u_{\vert \mathcal{N}(v)\vert}\}$ be the one-hop neighbors of $v$. 

We aim to find a lower-dimensional representation $h_v$ for node $v\in\V$. Firstly, a function $f:2^{\mathcal{N}(v)}\rightarrow 2^{\hat{\mathcal{N}}(v)}$ is learned to map a neighborhood set $\mathcal{N}(v)$ into a signal neighborhood set $\hat{\mathcal{N}}(v)$, where $\hat{\mathcal{N}}(v)\subseteq\mathcal{N}(v)$. Then the node representations are generated based on the signal neighborhood set, $h: 2^{\hat{\mathcal{N}}(v)}\xrightarrow{f} \R^d$ . Given an order of the neighbors $u_1,...,u_{\vert\mathcal{N}(v)\vert}$, we decompose the conditional probability of $\hat{\mathcal{N}}(v)$ given $\mathcal{N}(v)$ as $p(\hat{\mathcal{N}}(v)\vert\mathcal{N}(v))=\Pi^{\vert \hat{\mathcal{N}}(v)\vert}_{t=1}p(a_t\vert\mathcal{N}(v),a_1,...,a_{t-1})$ using chain rule~\cite{liu2015optimality}, where $a_t = \{0,1\}$, $a_t=1$ indicates selecting $u_t$ as a signal neighbor while $a_t=0$ indicates removing $u_t$. We solve this signal neighbor selection problem by learning a policy $\pi_\theta (a_t\vert s_t)= p(a_t\vert\mathcal{N}(v),a_1,...,a_{t-1})$ with neighborhood set $\mathcal{N}(v)$ and the predicted action values $\{a_i\}_{i=1}^{t-1}$ as inputs. The objective of signal neighbor selection is to select a subset of neighbors that maximize a given reward function $R_\pi(\hat{\mathcal{N}}(v)) = \mathbb{E}_{\hat{\mathcal{N}}(v)}\left[\sum_{t=1}^{\vert\hat{\mathcal{N}}(v) \vert}r_t\right] $, where $\hat{\mathcal{N}}(v)$ is the generated signal neighborhood set, $r_t$ is the task-specific reward used to evaluate the action $a_t$, and $R_\pi$ is the cumulative reward function. The representation of node $v$ can then be learned by aggregating the neighborhood information from the signal neighbors $\hat{\mathcal{N}}(v)$.

Selecting an optimal subset from a candidate set by maximizing an objective function is NP-hard which can be approximatively solved by greedy algorithms with a submodular function~\cite{nemhauser1978analysis}. With this observation, we design our reward function that satisfies submodularity, and show that the proposed GDPNet is mathematically equivalent to solving the submodular maximizing problem. Thus our solution can be bounded by $(1-\frac{1}{e}) R\left(\mathcal{N}(v)^*\right)$, where $\mathcal{N}(v)^*$ is the optimal neighborhood set.

\begin{table}
\centering
\caption{Notation description}
\begin{tabular}{cl}\toprule
Notation & Description \\\midrule
$\mathcal{N}(v)$ & neighborhood set of node $v$ \\
$\hat{\mathcal{N}}(v)$ & signal neighborhood set of node $v$  \\
$\hat{\mathcal{N}}(v)_t$ & signal neighborhood set of node $v$ at time $t$ \\
$\hat{\mathcal{N}}(v)_t^\mathsf{c}$ & complementary set, $\hat{\mathcal{N}}(v)_t^\mathsf{c} = \mathcal{N}(v)\setminus \hat{\mathcal{N}}(v)_t$\\\midrule
$h_v^t$& $h_v^t\in\R^d$, embedding of target node $v$ at time $t$\\
$x_v$& $x_v\in\R^D$, feature vector of node $v$ \\
\midrule
$s_t = [h_v^t,h_{u_t}]$ & the states, $s_{t}\in\mathbb{R}^{2\times d}$, $u_t$ is the $t^\textit{th}$ neighbor of $v$ \\
$r_t $ & reward function at time $t$\\
$R_\pi $ & total reward function, $R_\pi  =\E_{\hat{\mathcal{N}}(v)}\left[\sum_{t=1}^{\vert\hat{\mathcal{N}}(v) \vert}r_t\right]$\\
$a_t = \{0,1\}$ & \tabincell{l}{action space, where $a_t=1$ represents neighbor \\ selection, and $a_t=0$ represents neighbor removal} \\
$\pi_\theta(a_t\vert s_t)$ & \tabincell{l}{the policy which maps the current state into the \\ action distribution.}\\\bottomrule
\end{tabular}
\label{tab:notation}
\end{table}

\subsection{Signal Neighbor Selection Environment}\label{subsec:env}

We formulate the problem of selecting a set of signal neighbors from a given neighborhood set as a Markov decision process (MDP) $(\mathcal{S}, \mathcal{A},P,R,\gamma)$, where $\mathcal{S}$ is the state space, $\mathcal{A}$ is the action space, $P$ is the state transition probability matrix that describes the transition probability of the state after taking an action, $R$ is the reward function and $\gamma$ is discount factor of the MDP. The signal neighbor selection process can be described by a trajectory with $\hat{\mathcal{N}}(v)$ time steps $s_0,a_0,r_0,...,s_{\vert\hat{\mathcal{N}}(v)\vert},a_{\vert\hat{\mathcal{N}}(v)\vert},r_{\vert\hat{\mathcal{N}}(v)\vert}$. MDP requires the state transition dynamics to satisfy the Markov property $p(s_{t+1}\vert s_t)$. Thus we learn a policy $\pi_\theta(a_t\vert s_t)$ that only considers the current state $s_t$.

In reinforcement learning, the agent learns a policy via interacting with the environment. The main components (i.e., state, action, and reward) in the signal neighbor selection environment are described as follows,

\begin{itemize}
\item \textbf{State ($\mathcal{S}$)}: The state $s_t = [h_v^t, h_{u_t}]$ encodes the information from the current node $v$ and the selected node $u_t$, which is concatenation of the intermediate embeddings $h_v^t$ and $h_{u_t}$ of the target node $v$ and the $t^\textit{th}$ neighbor $u_t$, respectively. The calculation of $h_v^t$ and $h_{u_t}$ are defined in Section~\ref{subsec:model}. Consequently, a newly selected neighbor $u_{t}$ will update the embedding of $v$ from $h_v^t$ to $h_v^{t+1}$ which can be viewed as state transition. 
\item \textbf{Action ($\mathcal{A}$)}: Given an order of the neighbors $u_1,...,u_{\vert\mathcal{N}(v)\vert}$ of node $v$, the policy $\pi_\theta(a_t\vert s_t)$ maps the state $s_t$ into an action $a_t =\{0,1\}$ at each time step $t$, $t=1,...,\vert\hat{\mathcal{N}}(v)\vert$.  $a_1 = 1$ indicates $u_1$ is selected as a signal neighbor, while $a_1 = 0$ means $u_1$ is not selected.
\item \textbf{Reward ($R$)}: Our goal is to find an optimal set of signal neighbors $\hat{\mathcal{N}}(v)$ from a finite neighborhood set $\mathcal{N}(v)$ to learn robust graph embedding for downstream tasks such as node classification, link prediction and node clustering. The downstream tasks can produce task-specific scores as the reward signal for the signal neighbor selection phase. To ensure that the combination of the selected neighbors have maximum cumulative rewards. We employ the submodular function framework to define the marginal value reward function: 	
\begin{equation}\label{reward}
r_t = \frac{f_c(\textsc{Agg}(x_v,\{x_{u_t}\}))}{\sum_{\tilde{u}\in\hat{\mathcal{N}}(v)_{t}}f_c(\textsc{Agg}(x_v,\{x_{\tilde{u}}\}))}
\end{equation}
where $\textsc{Agg}(\cdot)$ aggregates both the target node feature $x_v$ and the neighbors' features $\{x_{u_t}\}$ to update the representations of the target node~\cite{hamilton2017inductive}, and $f_c(\cdot)$ returns the micro-averaged F1 score from the node classification task when considers $u_t$ as the neighbor.
\end{itemize}

The environment updates the states from $s_t = [h_v^t,h_{u_t}]$ to $s_{t+1} = [h_v^{t+1},h_{u_{t+1}}]$ by calculating the representations $h_v^{t+1} = \textsc{Agg}(x_{v},\{x_{\tilde{u}},\forall \tilde{u} \in \hat{\mathcal{N}}(v)_t\})$  at time $t+1$. It can be considered as a state transition: 
\begin{align}
p(s_{t+1}\vert s_t) = \sum_{a_t}\pi_\theta(a_t|s_t)p(s_{t+1}\vert s_t,a_t)
\end{align}
If $a_t = 1$, $\hat{\mathcal{N}}(v)_{t} = \hat{\mathcal{N}}(v)_{t-1}\cup \{u_t\}$, otherwise $\hat{\mathcal{N}}(v)_{t} = \hat{\mathcal{N}}(v)_{t-1}$.

\subsection{Graph Denoising Policy Network}\label{subsec:model}

With the definitions of the signal neighbor selection environment, we introduce the GDPNet model which includes two phases: signal neighbor selection and representation learning. Given a target node $v$, GDPNet first takes its neighborhood set $\mathcal{N}(v)$ as input and outputs a signal neighborhood subset $\hat{\mathcal{N}}(v)$. Then the representations $h_v$ is learned by aggregating the information from the signal neighborhood subset $\hat{\mathcal{N}}(v)$.

\subsubsection{Determine the Neighborhood Order}

As aforementioned, we use chain rule to decompose the signal neighbor selection as a sequential decision making process. However, it requires an order to make decisions. Here we design a high-level policy to learn an order $[u_1,...,u_{\vert\mathcal{N}(v)\vert}]$ for the policy $\pi_{\theta}$ to take action.

We define a \emph{regret score}  $l$ for each neighbor to help determine the order. A neighbor with large regret score indicates it will be selected with higher probability. At each time step, we calculate the regret score of each neighbor and sample one of the neighbor to be the $t^\textit{th}$ neighbor. The regret score is described as follows:
\begin{equation}\label{eq:regret}
l_k= W_1\cdot \textrm{ReLU}(W_2\cdot s_t ), s_t = [h_v^t,h_{u_k}]
\end{equation}
where $u_k$ is the $k$-th neighborhood in the neighborhood set $\mathcal{N}(v)$ with a random order and $W_1,W_2$ are parameter matrices. To reduce the size of $\hat{\mathcal{N}}(v)$ for computational efficiency, we add an ending neighbor $u_e$ to $\mathcal{N}(v)$ for early stopping purpose. When $u_e$ is sampled, the neighborhood selection process of node $v$ stops. We use the \emph{Softmax} function to normalize the regret scores, and sample one neighbor from the distribution generated by \emph{Softmax} to be the $t^\textit{th}$ neighbor.
\begin{equation}\label{softmax}
u_t\sim \textsc{Softmax}([l_1,l_2,...,l_e,...,l_{\vert \mathcal{N}(v)_t^\mathsf{c}\vert}])
\end{equation}
where $u_t \in \mathcal{N}(v)_t^\mathsf{c}$ is the $t^\textit{th}$ neighbor for signal neighbor selection, $\mathcal{N}(v)_t^\mathsf{c} = (\mathcal{N}(v)\setminus \hat{\mathcal{N}}(v)_t)$. $l_e$ indicates the regret score of the ending neighbor $u_e$. After selecting a neighbor $u_t$, we adopt the policy $\pi_\theta$ to determine whether to select $u_t$ as a signal neighbor. Then $u_t$ will be removed from $\hat{\mathcal{N}}(v)_t^\mathsf{c}$.

\subsubsection{Signal Neighbor Selection} 

Given the $t^\textit{th}$ neighbor $u_t$, GDPNet takes an action $a_t =\{0,1\}$ at time step $t$ to decide whether to select the $u_t$. We will make $\vert\hat{\mathcal{N}}(v)\vert$ decisions to select the signal neighbors for node $v$. Here the total number of signal neighbors can be automatically determined. As illustrated in Fig.~\ref{fig:demo}, a policy $\pi_\theta(a_t|s_t)$ is learned to map the state $s_t$ to the action $a_t$ at time step $t, t=1,...,\vert\hat{\mathcal{N}}(v)\vert$, meanwhile the corresponding reward $r_t$ will be provided. Our goal is to maximize the total reward of all the actions taken during these time steps, which can be learned by the following policy network,
\begin{align}\label{pi}
\pi_\theta(a_t\vert s_t) & = \sigma\left(W_1\cdot\textrm{ReLU}(W_2\cdot s_t )\right)\nonumber\\
a_t & \sim \pi_\theta \in\{0,1\}
\end{align}
where $W_1$ and $W_2$ are weight matrices shared with Eq.~(\ref{eq:regret}), and action $a_t$ is sampled from a Bernoulli distribution which is generated by $\pi_\theta(a_t|s_t)$.

\subsubsection{Representation Learning} 

At each time step, GDPNet calculates the embeddings of the target node $v$ and the $t$-th neighbor $u_t$ as follows,
\begin{align}
h_v^t& \leftarrow \textsc{Agg}(x_v,\{ x_{\tilde{u}},\forall \tilde{u} \in \hat{\mathcal{N}}(v)_t\})\\
h_{u_t}& \leftarrow \textsc{Agg}(x_{u_t},\{ \varnothing\})
\end{align}
where $\textsc{Agg}(x,\{ y_i,\forall i \in \mathcal{I}\}) = \sigma(W\cdot \texttt{MEAN}(\{x\}\cup\{ y_i,\forall i \in \mathcal{I}\})$, $x_v$ and $x_{u_t}$ are the features of node $v$ and $u_t$ respectively. We computed the embedding of neighbor $u_t$ via its own feature $x_{u_t}$, because the goal is to evaluate the individual contribution of $u_t$. In this work we only consider one-hop neighbors for simplicity. The GDPNet model can be easily extended to aggregate the information from multi-hop neighbors with an augmented candidate neighborhood set for selecting the signal neighbors.

As defined in Section~\ref{subsec:env}, the state at time step $t$, $s_t = [h_v^t,h_{u_t}]$, is a concatenation of the intermediate node embeddings $h_v^t$ and $h_{u_t}$. Eventually, the representations $h_v$ and state $s_t = [h_v^t,h_{u_t}]$ can be obtained.

\subsubsection{Iteration-wise Optimization} 

We consider an iteration-wise optimization approach to optimize the GDPNet model, which optimizes the signal neighbor selection phrase and representation learning phrase iteratively to learn the policy $\pi_\theta$ and the representations $h_v$. As for representation learning phase, it aggregates the information from the signal neighbors selected by $\pi_\theta$ to learn an embedding $h_v$ for target node $v$. Meanwhile, the policy $\pi_\theta$ is trained with the states calculated by $h_v$ and the corresponding rewards. In this paper, $\pi_\theta$ is optimized with Proximal Policy Optimization (PPO), one of the widely used policy gradient method~\cite{schulman2017proximal}. 
\begin{equation}
\max~~ \mathbb{E}_{s\sim\rho_{\theta_{old}},a\sim q}\left[\frac{\pi_\theta(a|s)}{q(a|s)}Q_{\theta_{old}}(s,a)\right], \quad 
\end{equation}
\begin{displaymath}
s.t. ~~\mathbb{E}_{s\sim\rho_{\theta_{old}}}
\left[D_{KL}(\pi_{\theta_{old}}(\cdot|s)
\parallel \pi_\theta(\cdot|s))\right] \leq \delta
\end{displaymath}
where Kullback–Leibler (KL) divergence penalty is used to control the change of the policy at each iteration to perform a trust region update with a threshold $\delta$. $q(a|s)$ and $Q_{old} = \sum_{i=t}^T \gamma r_i$ are the policy and Q-value, respectively, which are saved before the current time step during training. $\rho_{\theta_{old}}$ is the discounted state distribution defined as,
\begin{align}
\rho_{\theta_{old}}(s_t) = \sum_{t=0}^T \gamma^{t-1} p(s_t=s\vert \pi_{\theta_{old}})
\end{align}

\section{Connection with Submodular Maximization}\label{sec:submodular}

The design of the reward function in GDPNet described in Section~\ref{subsec:env} is inspired by the submodular function. With this carefully designed reward function, we build the connections with submodular maximization problem, and show that the solution provided by GDPNet can be bounded by $(1-\frac{1}{e}) R(\N(v)^*)$, where $\N(v)^*$ is the optimal neighborhood set. In this section we first introduce the key definitions related to submodular functions, followed by the proof of monotonicity and submodularity properties of the reward function in GDPNet.

\subsection{Submodular Reward Function}

In this section, we show that given a special form of reward function, the total reward function in GDPNet turns out to be submodular.

\begin{definition}[Submodular Function]
Let $N$ be a finite ground set and $f_s: 2^N \rightarrow \mathbb{R}$ is set function. A set function is submodular if it satisfies the diminishing returns property: $f_s(A\cup{c})-f_s(A) \geq f_s(B\cup{c})-f_s(B)$ and the monotone property: $f_s(A)\leq f_s(B)$ for all $A\subseteq B \subseteq \Omega$ and $v\in \mathcal{N}\setminus B$. 
\end{definition}

\begin{definition}[Submodular Maximization]
Let $w$ be an optimizer which maps a set $\Omega$ in to a subset $A$ with size smaller than $K$:
\begin{align}
w(\Omega) = A, ~~\vert A\vert \leq K
\end{align}

The submodular maximization problem is to find the best possible $w^*$ which satisfying:
\begin{equation}
w^*(\Omega) = {\arg \max}_{A\subseteq  \Omega :\vert A\vert \leq K} f_s(A)
\end{equation}
\end{definition}


The reward function $r_t $ in GDPNet is denoted by $r_t = R_\pi(\hat{\mathcal{N}}(v) \cup \{u_t\} ) - R_\pi(\hat{\mathcal{N}}(v))$, which is also named marginal value. Specifically, the reward function in GDPNet can be expressed as:
\begin{align}\label{eq:reward}
r_t = \frac{f_c(\textsc{Agg}(x_v,\{x_{u_t}\})}{\sum_{\tilde{u}\in\hat{\mathcal{N}}(v)_{t}}f_c(\textsc{Agg}(x_v,\{x_{\tilde{u}}\})}
\end{align}
where $\textsc{Agg}(x,\{y\}) = \sigma(W\cdot \texttt{MEAN}(\{x\}\cup\{ y\})$. Given this reward function $r_t$, we can prove that the cumulative reward function $R_\pi$ is a submodular function.

\begin{prop}
The total reward function $R_\pi$ is a monotone function, where
\begin{align}
R_\pi  =\mathbb{E}_{\hat{\mathcal{N}}(v)}\left[\sum_{t=1}^{\vert\hat{\mathcal{N}}(v) \vert}r_t\right]
\end{align}
\end{prop}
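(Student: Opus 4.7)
The plan is to reduce monotonicity of $R_\pi$ to nonnegativity of the per-step reward. Recall from the definition just above the proposition that the reward $r_t$ is the marginal value $r_t = R_\pi(\hat{\mathcal{N}}(v) \cup \{u_t\}) - R_\pi(\hat{\mathcal{N}}(v))$, so by a telescoping argument any $R_\pi(B)$ for a set $B = \{u_{t_1},\dots,u_{t_m}\}$ can be written as the sum of successive marginal rewards starting from the empty set. Thus to prove $R_\pi(A) \leq R_\pi(B)$ for $A \subseteq B$, it suffices to show that each $r_t \geq 0$ and then add the rewards incurred by the elements of $B \setminus A$.

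First I would write $r_t$ explicitly as in Eq.~(\ref{eq:reward}),
\begin{equation*}
r_t \;=\; \frac{f_c\bigl(\textsc{Agg}(x_v,\{x_{u_t}\})\bigr)}{\sum_{\tilde u\in\hat{\mathcal{N}}(v)_{t}} f_c\bigl(\textsc{Agg}(x_v,\{x_{\tilde u}\})\bigr)},
\end{equation*}
and observe that the numerator and denominator are both nonnegative, because $f_c(\cdot)$ returns a micro-averaged F1 score which lies in $[0,1]$. Provided the denominator is strictly positive (which can be assumed generically, or else $r_t$ is taken as $0$ by convention), we obtain $r_t \geq 0$ for every step $t$.

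Next I would invoke the telescoping identity
\begin{equation*}
R_\pi(B) - R_\pi(A) \;=\; \sum_{u_t \in B\setminus A} r_t \;\geq\; 0,
\end{equation*}
where the rewards are accumulated in the order dictated by the policy trajectory that realises the incremental addition of $B \setminus A$ to $A$. Because every summand on the right is nonnegative, the difference is nonnegative, hence $R_\pi(A) \leq R_\pi(B)$ for any $A \subseteq B \subseteq \mathcal{N}(v)$. Finally, since $R_\pi$ is defined as an expectation over trajectories of sums of such nonnegative rewards, monotonicity in the realised set transfers to the expectation by linearity.

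The main obstacle I expect is a purely bookkeeping one: the reward in Eq.~(\ref{eq:reward}) is written in terms of the intermediate set $\hat{\mathcal{N}}(v)_t$, so a careful argument is needed to make sure that when we compare $R_\pi(A)$ and $R_\pi(B)$ we are summing marginals that are consistently defined along a common extension order. Once the ordering is fixed, the proof reduces to the nonnegativity of $f_c$, which is immediate.
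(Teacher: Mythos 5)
Your proposal is correct and follows essentially the same route as the paper: both write the difference $R_\pi(B)-R_\pi(A)$ for nested sets as a telescoping sum of the per-step rewards and reduce monotonicity to $r_t \geq 0$, which holds because $f_c$ returns (nonnegative) F1 scores in Eq.~(\ref{eq:reward}). Your added remarks on the strict positivity of the denominator and on fixing a consistent extension order are slightly more careful than the paper's version, but they do not change the argument.
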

\begin{proof}
$R_\pi$ is monotone if $R_\pi(\hat{\mathcal{N}}(v)_{t_B}) - R_\pi(\hat{\mathcal{N}}(v)_{t_A}) \geq 0 $ whenever $\hat{\mathcal{N}}(v)_{t_A}\subseteq\hat{\mathcal{N}}(v)_{t_B}$ and $t_A < t_B$.
\begin{align}
&R_\pi(\hat{\mathcal{N}}(v)_{t_B}) - R_\pi(\hat{\mathcal{N}}(v)_{t_A}) \\
\Leftrightarrow & \sum_{t=1}^{\vert\hat{\mathcal{N}}(v)_{t_B} \vert}r_t -\sum_{t=1}^{\vert\hat{\mathcal{N}}(v)_{t_A} \vert}r_t = \sum_{t=t_A}^{t_B} r_t
\end{align}
we have $r_t \geq 0$ based on Eq.~(\ref{eq:reward}). Therefore $\sum_{t=t_A}^{t_B} r_t\geq 0$, and $R_\pi$ is monotone.
\end{proof}

\begin{prop}
The total reward function $R_\pi$ satisfies the submodularity property. That is, 
\begin{align}
&R_\pi(\hat{\mathcal{N}}(v)_{t_A} \cup \{u_t\} ) - R_\pi(\hat{\mathcal{N}}(v)_{t_A} ) \nonumber \\
&\geq (R_\pi(\hat{\mathcal{N}}(v)_{t_B} \cup \{u_t\} ) - R_\pi(\hat{\mathcal{N}}(v)_{t_B} ) )
\end{align}
whenever $\hat{\mathcal{N}}(v)_{t_A}\subseteq\hat{\mathcal{N}}(v)_{t_B}$ and $t_A < t_B$
\end{prop}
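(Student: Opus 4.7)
The plan is to reduce the submodularity inequality directly to the explicit form of the per-step reward in Eq.~(\ref{eq:reward}) and exploit the fact that this reward, viewed as the marginal value of adding a single neighbor $u_t$ to a set $S$, has a numerator that does not depend on $S$ while its denominator is monotonically non-decreasing in $S$.

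First I would introduce the shorthand $\alpha_u := f_c(\textsc{Agg}(x_v,\{x_u\}))$, so that the marginal reward of appending $u_t$ to a current signal set $S$ with $u_t \notin S$ becomes
\begin{equation}
r(S, u_t) \;=\; \frac{\alpha_{u_t}}{\alpha_{u_t} + \sum_{\tilde u \in S}\alpha_{\tilde u}}.
\end{equation}
By the marginal-value definition $r_t = R_\pi(S \cup \{u_t\}) - R_\pi(S)$ stated just above Eq.~(\ref{eq:reward}), this expression is precisely what appears on each side of the claimed inequality. Since $f_c(\cdot)$ is a micro-averaged F1 score we have $\alpha_u \geq 0$ for every candidate $u$, so the map $S \mapsto \sum_{\tilde u \in S}\alpha_{\tilde u}$ is monotone non-decreasing under set inclusion. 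The elementary inequality $c/(c+x) \geq c/(c+y)$ valid for $c \geq 0$ and $0 \leq x \leq y$ then yields $r(\hat{\mathcal{N}}(v)_{t_A}, u_t) \geq r(\hat{\mathcal{N}}(v)_{t_B}, u_t)$ whenever $\hat{\mathcal{N}}(v)_{t_A} \subseteq \hat{\mathcal{N}}(v)_{t_B}$, which is exactly the diminishing-returns condition.

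The step I expect to be the main subtlety is reconciling the expectation $R_\pi = \E[\sum_t r_t]$ with the set-valued statement of the proposition, since the raw cumulative sum of marginal rewards is in general path-dependent (different trajectories that end at the same signal set can accumulate different totals). The cleanest handling is to follow the same convention used by the authors in their monotonicity proof and interpret $R_\pi(\hat{\mathcal{N}}(v)_t)$ as the cumulative reward along the trajectory that built the prefix $\hat{\mathcal{N}}(v)_t$; once this trajectory is conditioned on, the inequality derived above holds pointwise and therefore survives integrating against the policy-induced distribution over extensions. I would also briefly address the degenerate case $u_t \in \hat{\mathcal{N}}(v)_{t_B}$, in which the right-hand marginal is zero while the left-hand marginal is non-negative by non-negativity of $\alpha_{u_t}$, so the inequality holds trivially.
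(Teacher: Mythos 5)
Your proof is correct and takes essentially the same route as the paper's: both substitute the explicit reward formula, observe that the numerator $f_c(\textsc{Agg}(x_v,\{x_{u_t}\}))$ does not depend on the set while the denominator is non-decreasing under set inclusion because the F1 scores are non-negative, and conclude the diminishing-returns inequality (the paper does this via the common-denominator identity $r_{t_A}-r_{t_B} = \frac{f_c(\textsc{Agg}(x_v,\{x_{u_t}\}))}{F(t_A)F(t_B)}\left(F(t_B)-F(t_A)\right) \geq 0$, which is your elementary inequality in disguise). Your additional remarks on the path-dependence of $R_\pi$ and the degenerate case $u_t \in \hat{\mathcal{N}}(v)_{t_B}$ simply make explicit points the paper leaves implicit.
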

\begin{proof}
We define,
\begin{align}
r_{t_A} &= R_\pi(\hat{\mathcal{N}}(v)_{t_A} \cup \{u_t\}) - R_\pi(\hat{\mathcal{N}}(v)_{t_A} )\\
 r_{t_B} &= R_\pi(\hat{\mathcal{N}}(v)_{t_B} \cup \{u_t\}) - R_\pi(\hat{\mathcal{N}}(v)_{t_B})
\end{align}

Then we need to show $ r_{t_A} \geq r_{t_B}$. Based on the reward definition in Eq.~(\ref{eq:reward}), we have,
\begin{align}
r_{t_A} - r_{t_B} = &
\frac{f_c(\textsc{Agg}(x_v,\{x_{u_t}\})}{\sum_{\tilde{u}\in\hat{\mathcal{N}}(v)_{t_A}}f_c(\textsc{Agg}(x_v,\{x_{\tilde{u}}\})}\nonumber\\
& - \frac{f_c(\textsc{Agg}(x_v,\{x_{u_t}\})}{\sum_{\bar{u}\in\hat{\mathcal{N}}(v)_{t_B}}f_c(\textsc{Agg}(x_v,\{x_{\bar{u}}\})}
\end{align}
Assume $F(t_A) = \sum_{\tilde{u}\in\hat{\mathcal{N}}(v)_{t_A}}f_c(\textsc{Agg}(x_v,\{x_{\tilde{u}}\})$, and $F(t_B) = \sum_{\bar{u}\in\hat{\mathcal{N}}(v)_{t_B}}f_c(\textsc{Agg}(x_v,\{x_{\bar{u}}\})$, the above equations can rewritten as,
\begin{align}
r_{t_A} - r_{t_B} =  \frac{f_c(\textsc{Agg}(x_v,\{x_{u_t}\})}{F(t_A) \cdot F(t_B)}(F(t_B) - F(t_A))
\end{align}
We have $F(t_B) - F(t_A)\geq 0$ based on the monotonicity property. Thus we have $ r_{t_A} \geq r_{t_B}$.
\end{proof}

\subsection{Equivalence between GDPNet and Submodular Maximization Problem}

We will establish the following facts, which together imply the equivalence between GDPNet and submodular maximization problem,
\begin{itemize}
	\item The total reward function $R_\pi$ defined in the signal neighbor selection phase is a submodular function, which is equivalent to $f_s: 2^N \rightarrow \mathbb{R}$. 
	\item The submodular maximization problem can be formulated as an MDP which is equivalent to GDPNet. 
	\item The objective function in GDPNet is equivalent to the counterpart in submodular maximization. 
\end{itemize}

Firstly, the goal of submodular maximization is to find the $w^*$, with the objective function:
\begin{equation}
\label{woptimize}
w^*(\Omega) = {\arg\max}_{A\subseteq  \Omega :\vert A\vert \leq K} f_s(A)
\end{equation}
where $f_s(A)$ is the cumulative value of each element in set $A$. Let $A$ be the selected neighborhood set $\hat{\mathcal{N}}(v)$, then $R_\pi(\hat{\mathcal{N}}(v)) = f_s(\hat{\mathcal{N}}(v))$.
 
Secondly, the submodular maximization problem can be formulated as an MDP where the set $A$ with the selected items indicates the state. After adding a new item $c$ into $A$, the state is updated to $A\cup\{c\}$. 

Lastly, the objective function of GDPNet also aligns to the optimizer $w$ in submodular maximization, where each $\pi_\theta$ can be considered as an optimizer $w$ in Equation~(\ref{woptimize}):
\begin{equation}
\pi_\theta^* = {\arg\max}_\theta \mathbb{E}_{A} R_\pi(A) 
\end{equation}
where $A$ is equivalent to the signal neighbor set $\hat{\mathcal{N}}(v)$

\begin{theorem}
Greedy gives a $(1-\frac{1}{e})$-approximation for the problem of ${\arg\max}_\theta \mathbb{E}_{A} R_\pi(A)$ when $R_\pi(A): 2^N \rightarrow \R$ is a monotone submodular function.
\end{theorem}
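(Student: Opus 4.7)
The plan is to carry out the classical Nemhauser--Wolsey--Fisher analysis, specialized to the reward $R_\pi$ whose monotonicity and submodularity were just established in Propositions 1 and 2. Let $A^* = \{a_1^*,\ldots,a_K^*\} \subseteq \mathcal{N}(v)$ denote an optimal signal neighborhood set of size at most $K$, and let $A_0 = \emptyset, A_1, \ldots, A_K$ denote the sequence produced by the greedy optimizer that at step $i{+}1$ adds the neighbor $u \in \mathcal{N}(v) \setminus A_i$ maximizing the marginal reward $R_\pi(A_i \cup \{u\}) - R_\pi(A_i)$. The target is to show $R_\pi(A_K) \geq (1-1/e) R_\pi(A^*)$; since a deterministic greedy strategy is realizable as a policy $\pi_\theta$ in GDPNet (one whose Bernoulli head concentrates on the top-marginal neighbor at each state), this lower bound transfers directly to $\max_\theta \mathbb{E}_A R_\pi(A)$, yielding the claimed $(1-1/e)$-approximation.

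The first key step is to upper-bound the residual gap $R_\pi(A^*) - R_\pi(A_i)$ by a multiple of the greedy marginal. By monotonicity, $R_\pi(A^*) \leq R_\pi(A^* \cup A_i)$, and telescoping along the enumeration of $A^*$ gives
\begin{equation*}
R_\pi(A^* \cup A_i) - R_\pi(A_i) = \sum_{j=1}^{K} \bigl[ R_\pi(A_i \cup \{a_1^*,\ldots,a_j^*\}) - R_\pi(A_i \cup \{a_1^*,\ldots,a_{j-1}^*\}) \bigr].
\end{equation*}
Submodularity bounds each summand by $R_\pi(A_i \cup \{a_j^*\}) - R_\pi(A_i)$, and the greedy selection rule bounds each of these in turn by $R_\pi(A_{i+1}) - R_\pi(A_i)$, yielding $R_\pi(A^*) - R_\pi(A_i) \leq K \bigl( R_\pi(A_{i+1}) - R_\pi(A_i) \bigr)$.

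The second step is to unroll this one-step contraction. Setting $\Delta_i = R_\pi(A^*) - R_\pi(A_i)$, the inequality rearranges to $\Delta_{i+1} \leq (1 - 1/K)\Delta_i$, so $\Delta_K \leq (1-1/K)^K \Delta_0 \leq e^{-1} R_\pi(A^*)$. Transposing gives $R_\pi(A_K) \geq (1 - 1/e) R_\pi(A^*)$, which is precisely the target inequality.

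The main obstacle I anticipate is the bookkeeping in the telescoping step: one must apply diminishing returns at every pair of nested sets of the form $A_i \cup \{a_1^*,\ldots,a_{j-1}^*\} \subseteq A_i \cup \{a_1^*,\ldots,a_j^*\}$, not just at the specific pairs encountered along the greedy trajectory, so the argument leans on the full generality of Proposition 2 rather than its trajectory-level instantiation used in Section 4.1. A secondary subtlety is interpreting $\arg\max_\theta \mathbb{E}_A R_\pi(A)$: one has to argue that the expected-reward optimum over policies is sandwiched between $R_\pi$ of the greedy set (a lower bound, via realizability inside the GDPNet policy class) and $R_\pi(A^*)$ (an upper bound by definition of $A^*$), which is what allows the deterministic approximation guarantee to be recast in the policy formulation of the theorem statement.
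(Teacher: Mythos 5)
Your proof is correct and takes essentially the same route as the paper: both reduce the claim to the classical Nemhauser--Wolsey--Fisher greedy guarantee for monotone submodular maximization, relying on the monotonicity and submodularity of $R_\pi$ established in Propositions 1 and 2. The only difference is that the paper disposes of the $(1-\frac{1}{e})$ bound by citing Nemhauser et al.\ (1978) after asserting the equivalence of objectives, whereas you reproduce the telescoping and contraction argument in full and additionally make explicit the realizability step connecting the deterministic greedy set to ${\arg\max}_\theta \mathbb{E}_{A} R_\pi(A)$, a link the paper handles only informally in its ``equivalence'' discussion.
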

\begin{proof}
Based on the aforementioned equivalence between the objective functions of GDPNet and submodular maximization, we need to show that a $(1-\frac{1}{e})$-approximation solution can be achieved for $w^*(\Omega) = {\arg\max}_{A\subseteq  \Omega :\vert A\vert \leq K} f_s(A)$ with a submodular function $f_s$, which has been proved in~\cite{nemhauser1978analysis}.
\end{proof}

\section{Experiment}\label{sec:results}

Experiments are conducted to evaluate the robustness of the representations learned by the proposed GDPNet model. As for quantitative experiments, we focus on two tasks: (1) \emph{Robustness Evaluation}, we use micro-averaged F1 score to evaluate our model against baselines on node classification task, and (2) \emph{Denoising Evaluation}, we evaluate the denoising capability of GDPNet by comparing with baselines running on the denoised graph generated by GDPNet. We extract four datasets Cora, Citeseer, PubMed and DBLP followed by spliting them for training, test and validation with the supervised learning scenario which follows the previous work~\cite{hamilton2017inductive,velivckovic2017graph,chen2018fastgcn}. As for qualitative experiments, we conduct the embedding visualization which projects the learned high-dimension representations to a 2D space. In all these experiments, we separate out test data from training and perform predictions on nodes that are not seen during training.
\begin{table}\label{tab:data}
\centering
\caption{Basic statistics of datasets}
\begin{tabular}{cccccc} \toprule
Dataset & Nodes & Edges&Classes&Features&Train/Validate/Test\\\midrule
Cora&2,708&5,429&7&1,433&1,208/500/1,000\\
Citeseer& 4,230&5,358&6&602&2,730/500/1,000\\
PubMed&19,717 &44,338&3&500&18,217/500/1,000\\
DBLP&17,716 &105,734&4&1,639&16,216/500/1,000\\
\bottomrule
\end{tabular}
\end{table}

\subsection{Experimental Setup and Baselines}

For all these tasks, we apply a two-layer policy network to select the signal neighbors. 
The architectural hyper-parameters are optimized on the Cora dataset and shared by the other datasets. 
The embedding dimension is $128$. The size of the two hidden layers in policy network are $64$ and $36$, respectively, with active function ReLU. The batch size is $256$. The discount factor is optimized as $0.95$ for Cora and DBLP, $0.9$ for PubMed and $1.0$ for Citeseer. We compare our method with the following baselines: 
\begin{itemize}
    \item \textbf{LR}: Logistic regression (LR) model which takes the node features as inputs, and ignores graph structure.
    \item \textbf{GCN~\cite{kipf2016semi}}: GCN uses the local connection structure of the graph as the filter to perform convolution, where filter parameters are shared over all locations in the graph. We use inductive version of GCN in this paper for comparison
    \item \textbf{GAT~\cite{velivckovic2017graph}}: GAT utilizes the attention mechanism to enhance the performance of the graph convolutional network by considering the entire neighborhoods.
    \item \textbf{FastGCN~\cite{chen2018fastgcn}}: FastGCN considers graph convolutions as integral transforms of embedding functions, and samples the neighborhoods in each layer independently to addresses the recursive expansion of neighborhoods.
    \item  \textbf{GraphSAGE~\cite{hamilton2017inductive}} GraphSAGE extends the original graph convolution-style framework to the inductive setting. It randomly samples a fixed-size neighborhood of each node followed by performing a specific aggregator over it.
    
\end{itemize}

Our proposed model is denoted as \textbf{GDPNet}. We also introduce a variant \textbf{GDPNet$_\textit{RO}$} which performs the signal neighbor selection with a random order of the neighbors.

\begin{figure*}
	\centering
	\subfigure[Cora-GDPNet]{%
		\includegraphics[ width=1.64in]{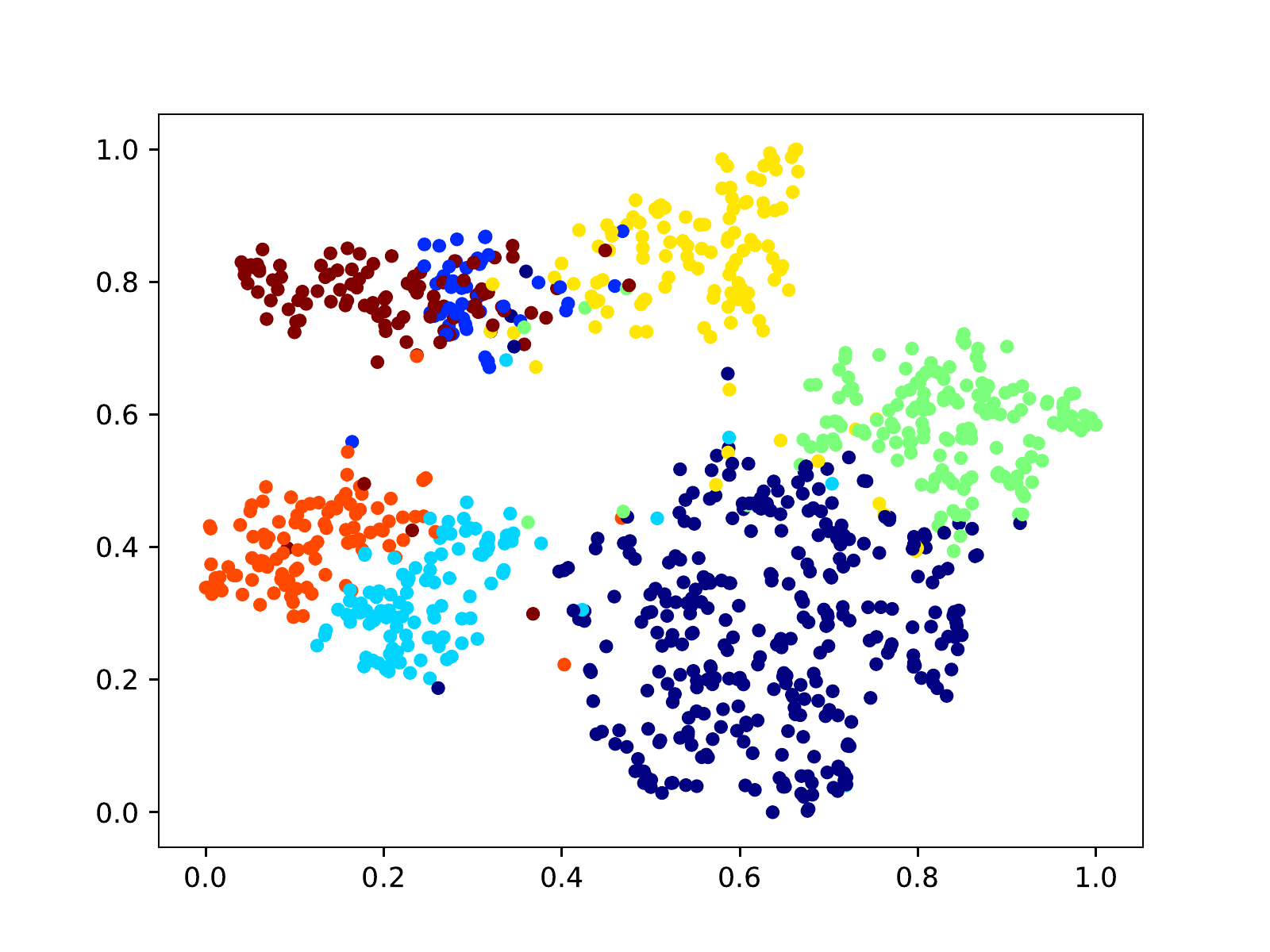}
		\label{fig:subfigure5}}
~
	\subfigure[Cora-GAT]{%
		\includegraphics[ width=1.64in]{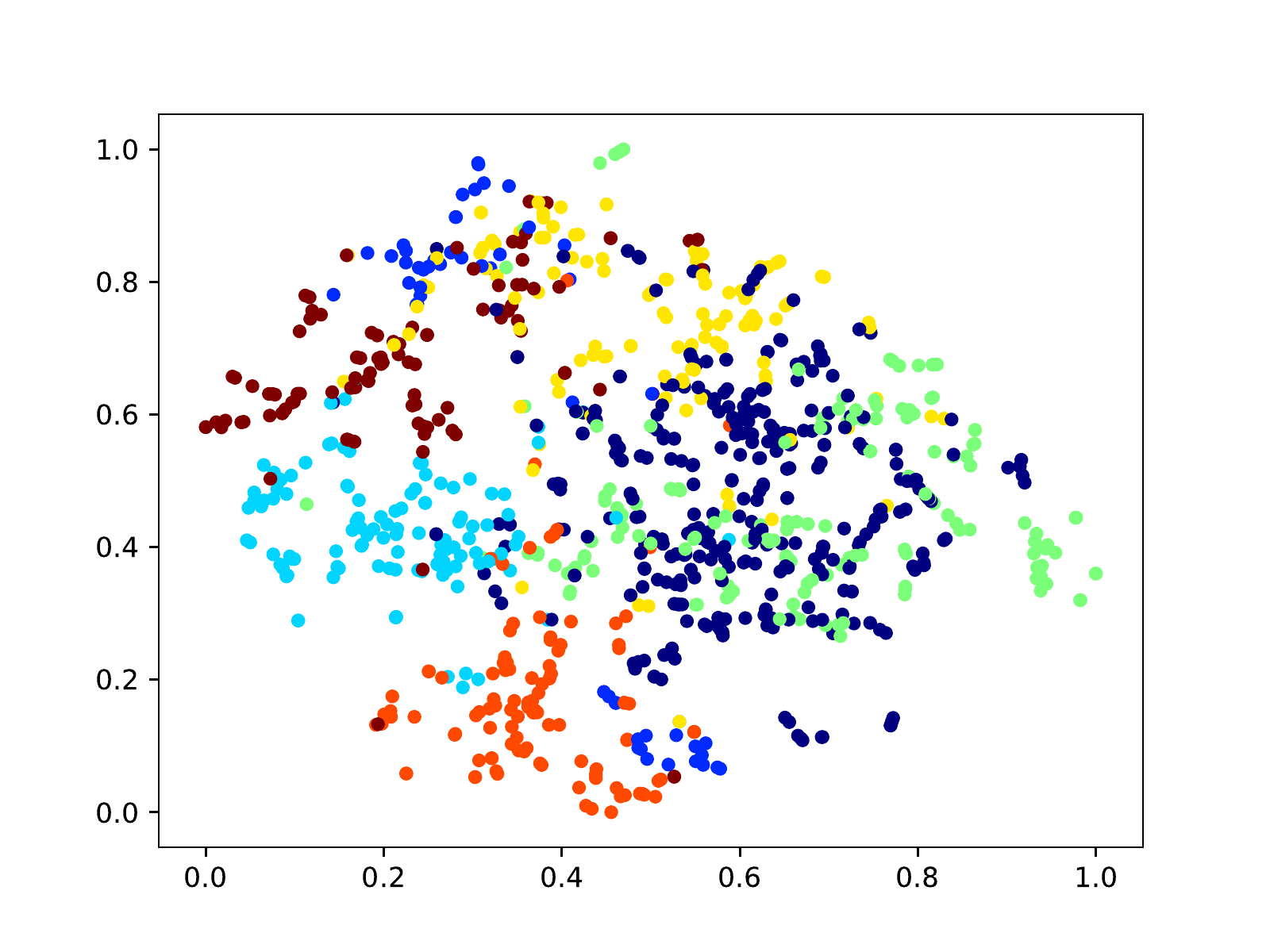}
		\label{fig:subfigure6}}
~
	\subfigure[Cora-GCN]{%
		\includegraphics[ width=1.64in]{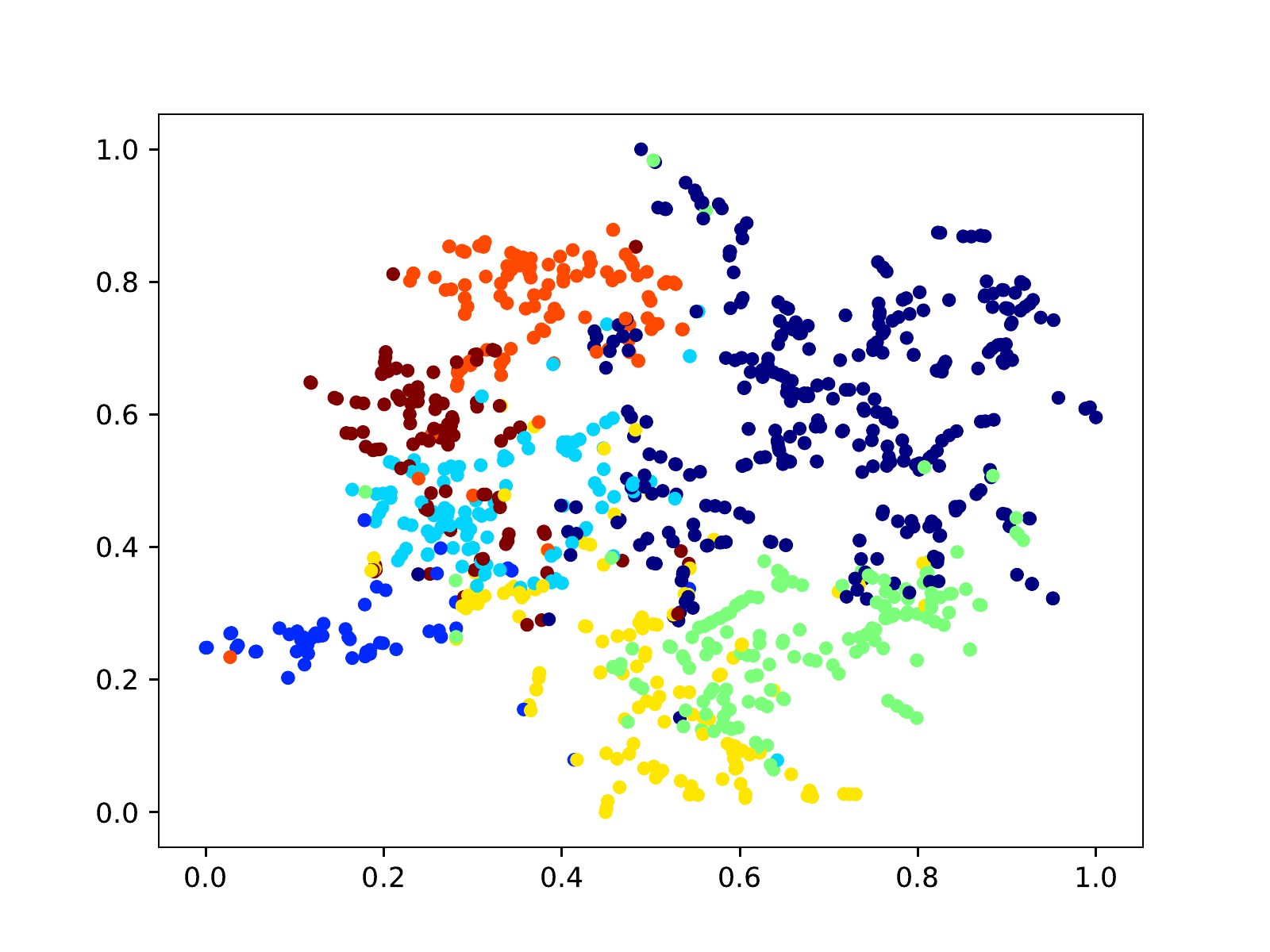}
		\label{fig:subfigure5}}
~
	\subfigure[Cora-GraphSAGE]{%
		\includegraphics[ width=1.64in]{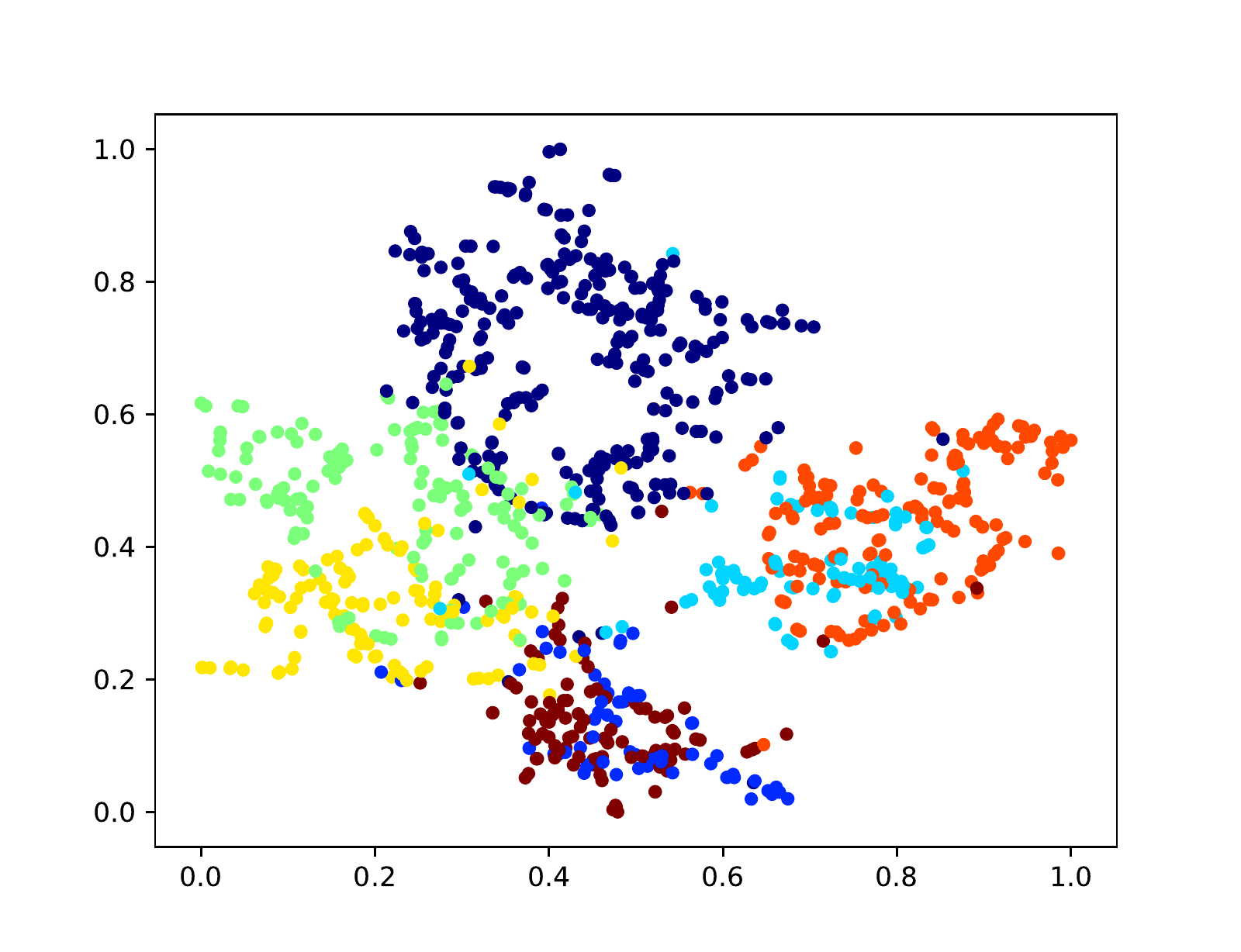}
		\label{fig:subfigure6}}

	\subfigure[Citeseer-GDPNet]{%
		\includegraphics[width=1.64in]{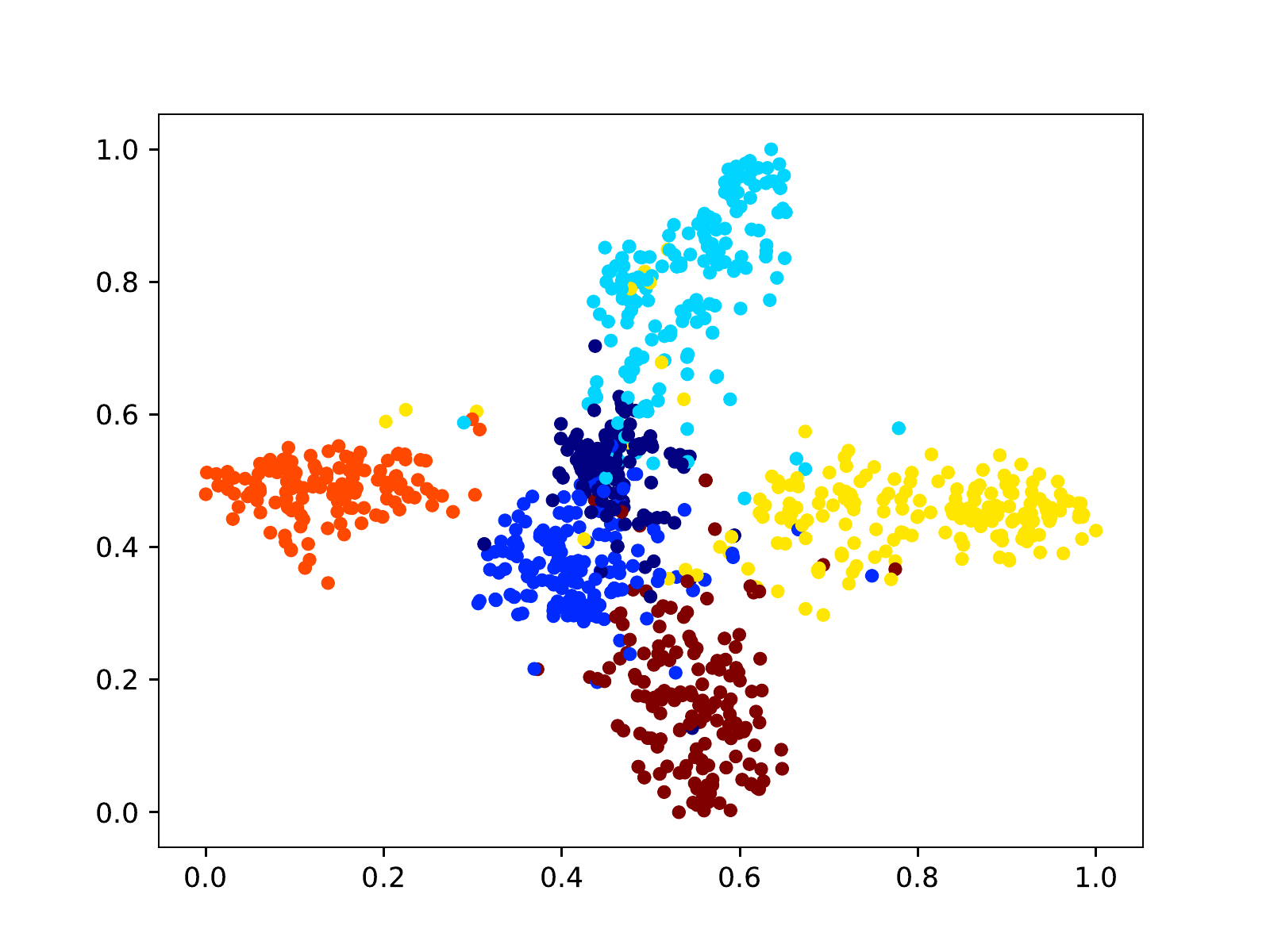}
		\label{fig:subf1}}
~
	\subfigure[Citeseer-GAT]{%
		\includegraphics[width=1.64in]{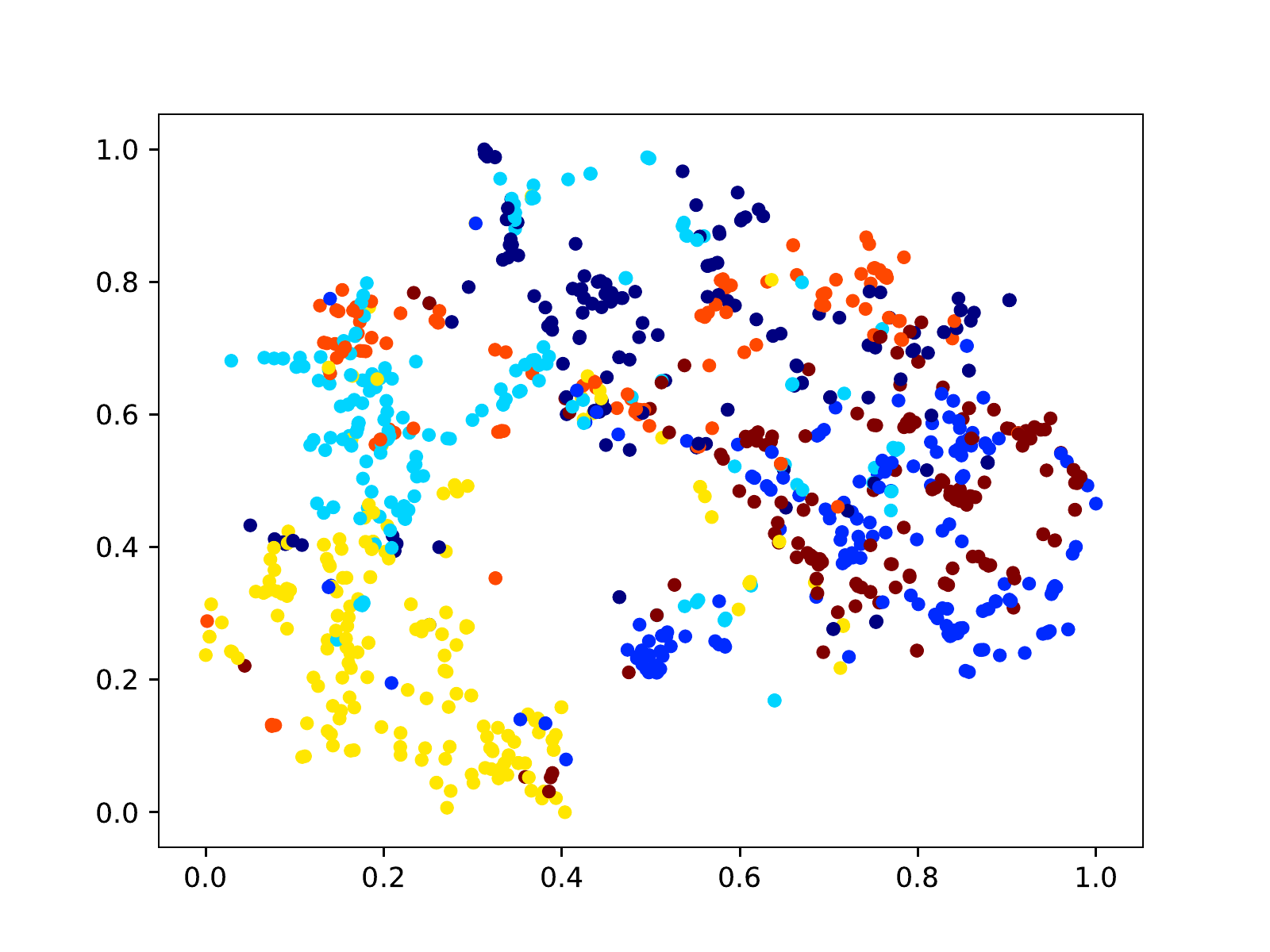}
		\label{fig:subfigure2}}
~
	\subfigure[Citeseer-GCN]{%
		\includegraphics[ width=1.64in]{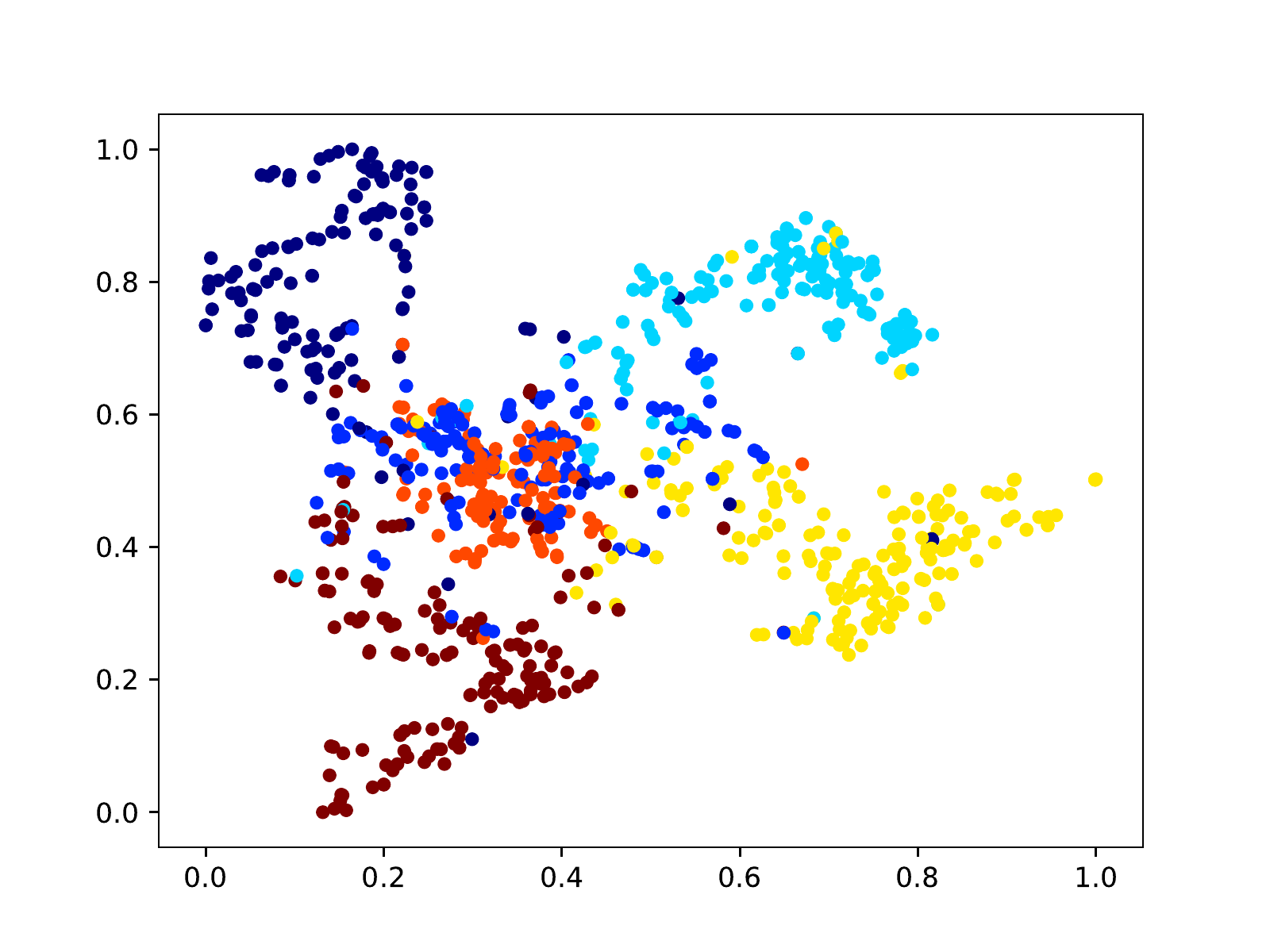}
		\label{fig:subfigure3}}
~
	\subfigure[Citeseer-GraphSAGE]{%
		\includegraphics[width=1.64in]{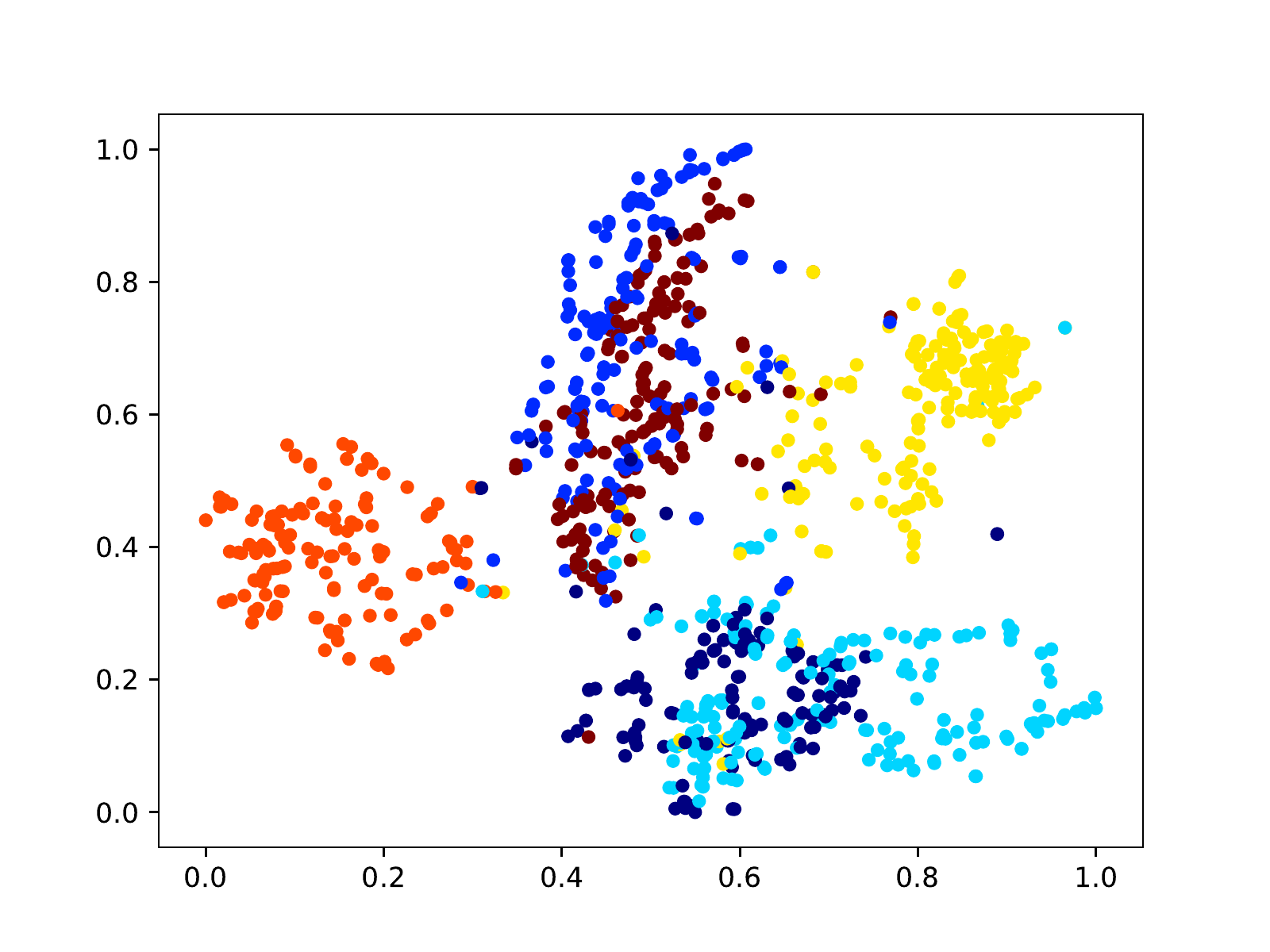}
		\label{fig:subfigure4}}

	\subfigure[PubMed-GDPNet]{%
		\includegraphics[ width=1.64in]{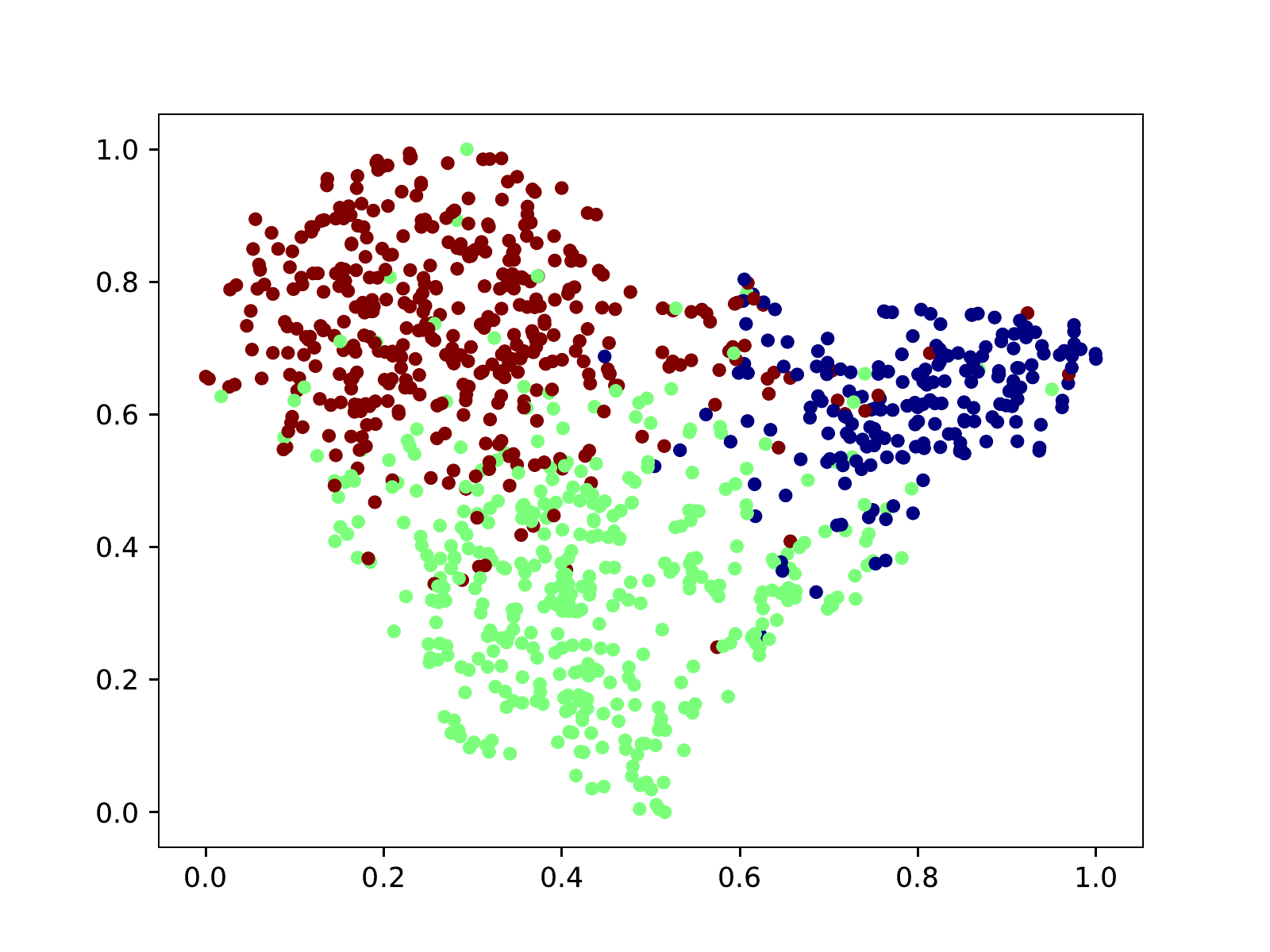}
		\label{fig:subfigure5}}
~
	\subfigure[PubMed-GAT]{%
		\includegraphics[ width=1.64in]{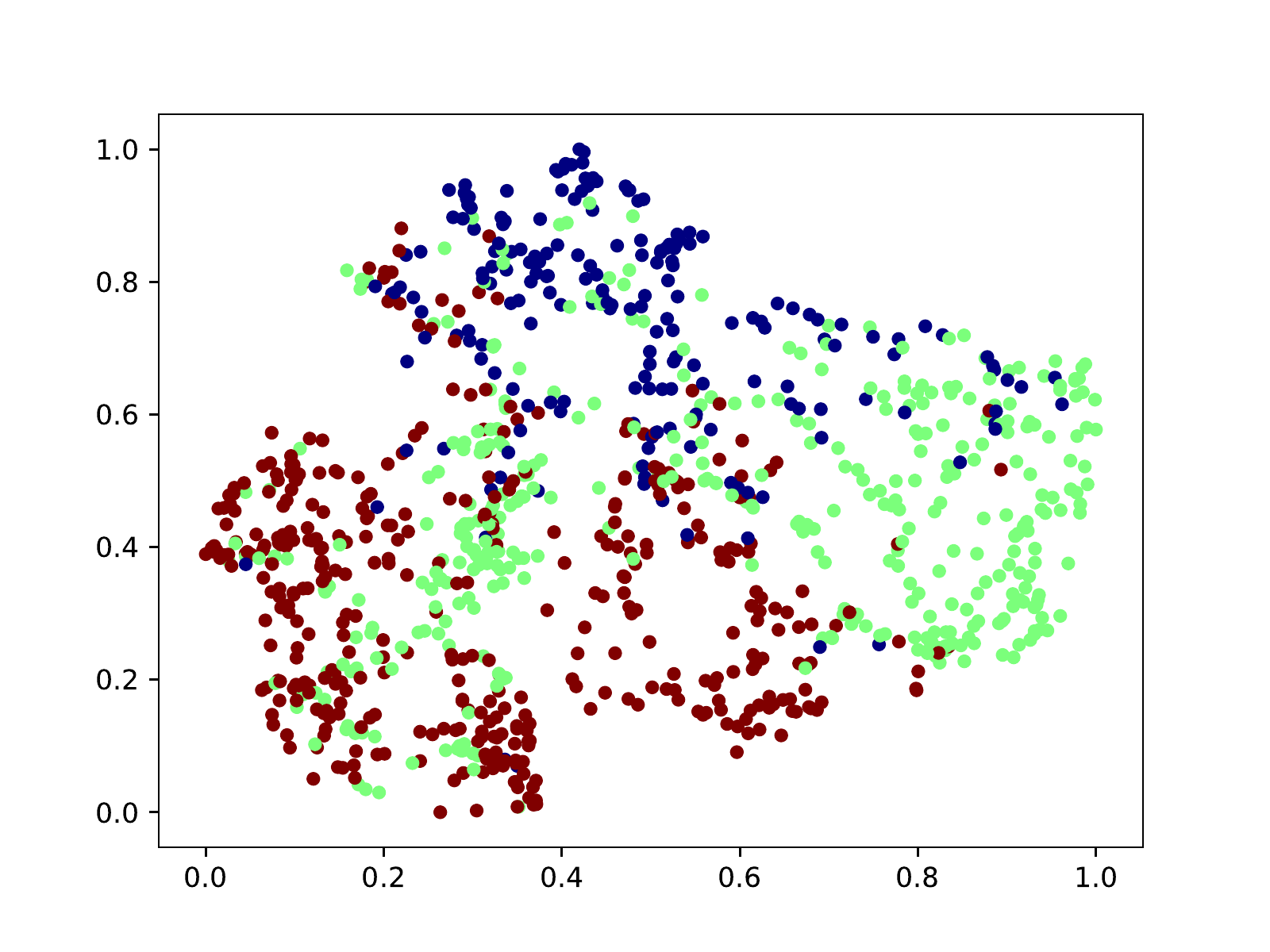}
		\label{fig:subfigure6}}
~
	\subfigure[PubMed-GCN]{%
		\includegraphics[ width=1.64in]{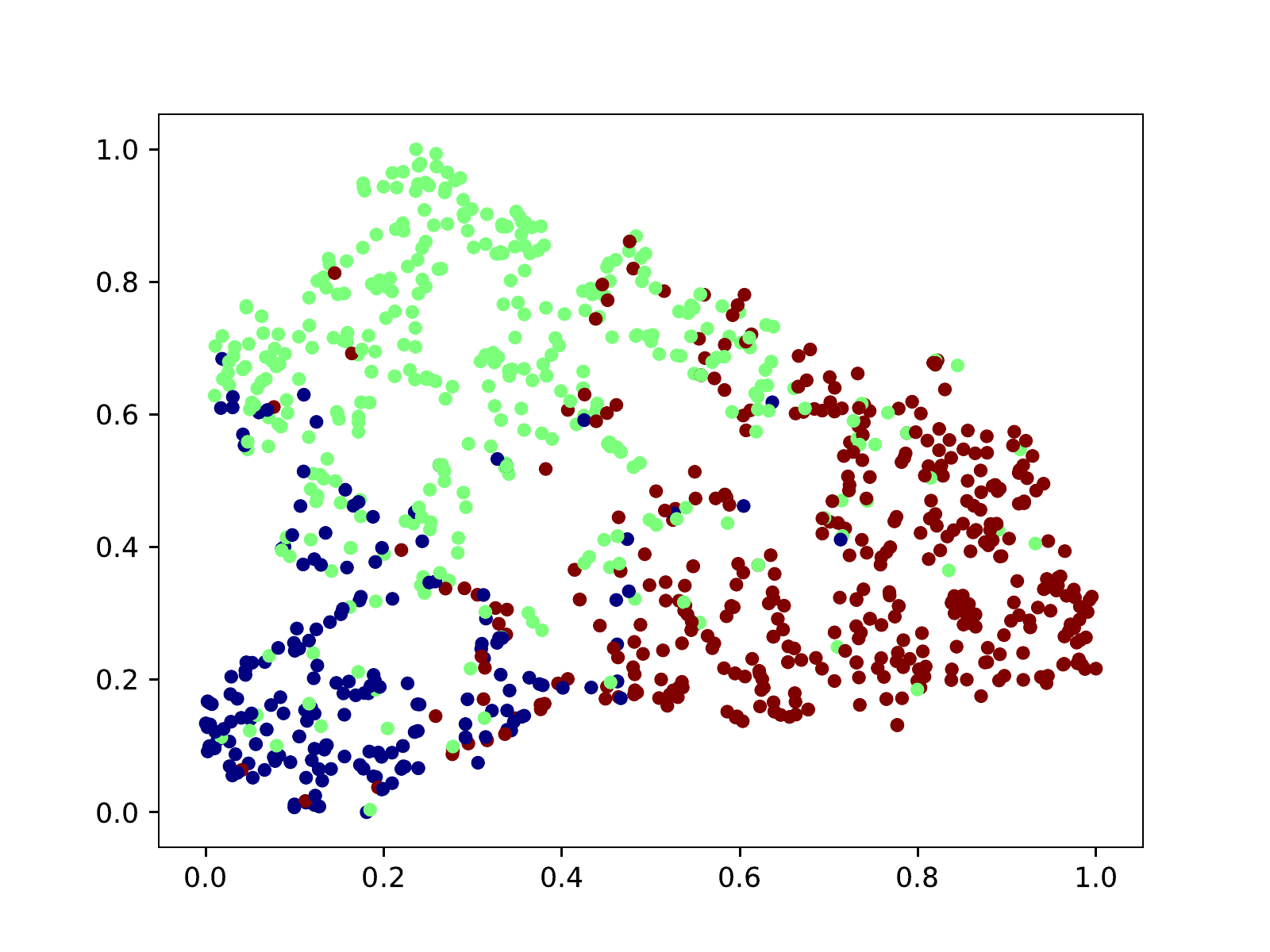}
		\label{fig:subfigure5}}
~
	\subfigure[PubMed-GraphSAGE]{%
		\includegraphics[ width=1.64in]{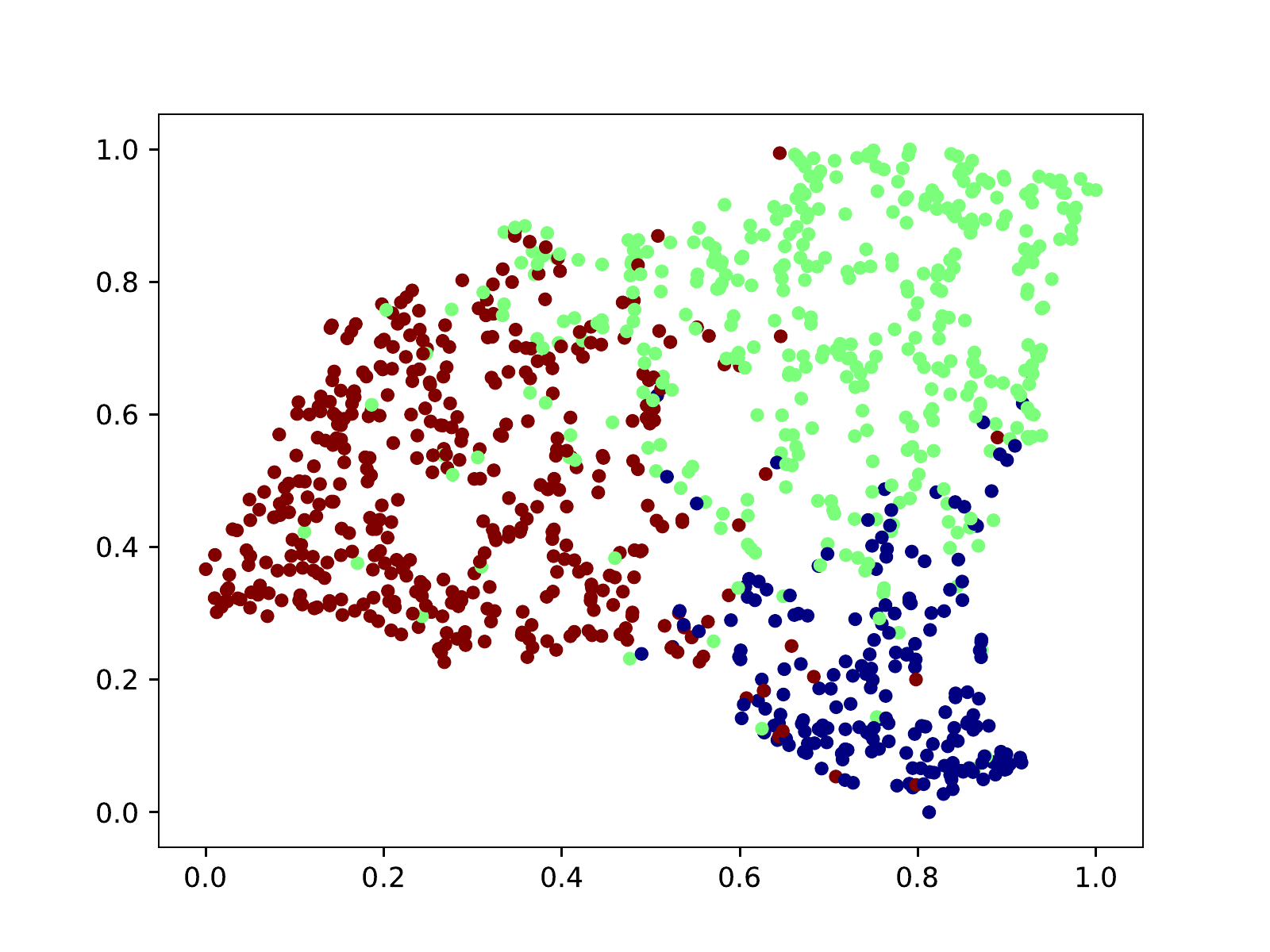}
		\label{fig:subfigure6}}

	\subfigure[DBLP-GDPNet]{%
		\includegraphics[ width=1.64in]{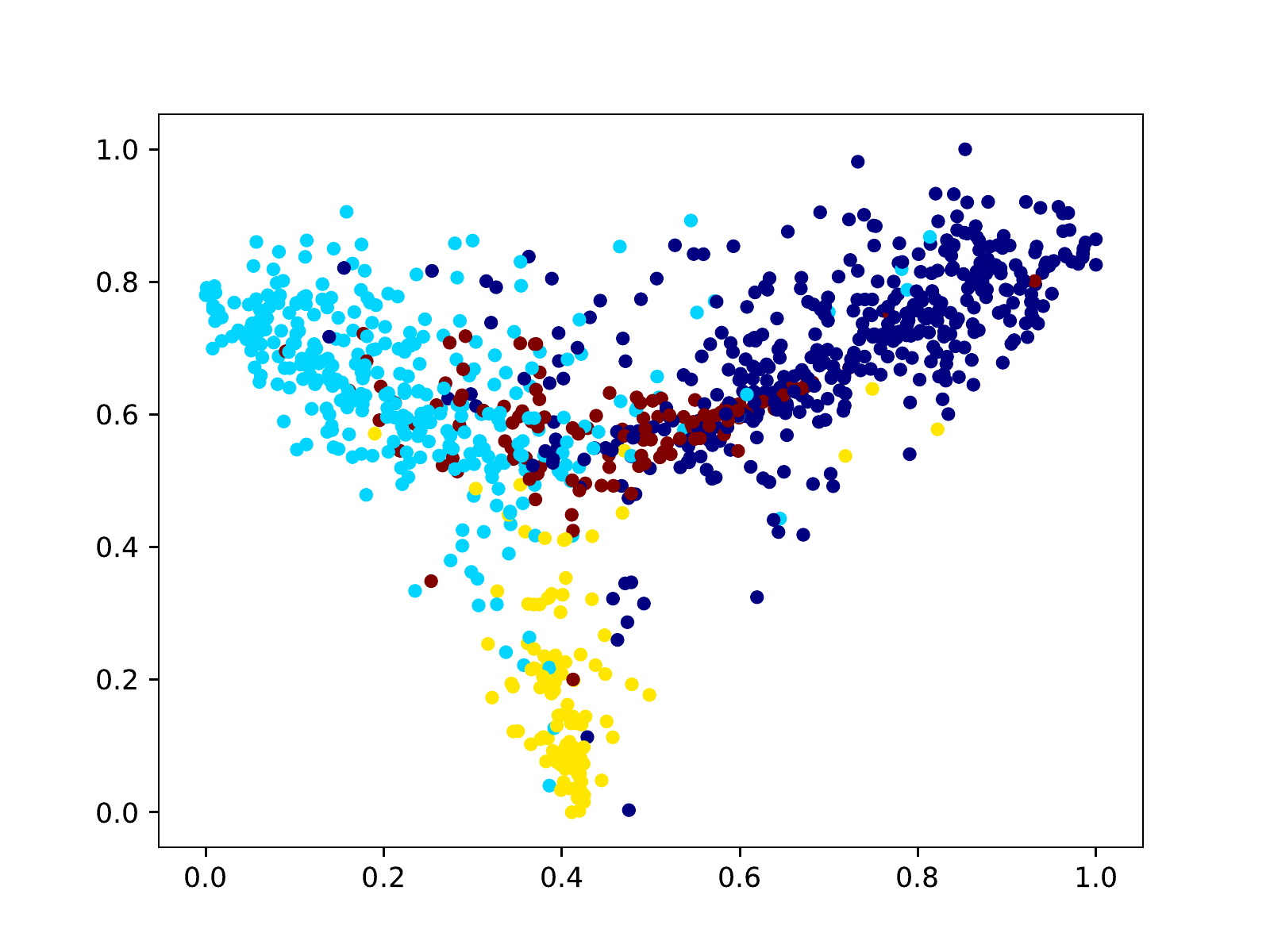}
		\label{fig:subfigure5}}
~
	\subfigure[DBLP-GAT]{%
		\includegraphics[ width=1.64in]{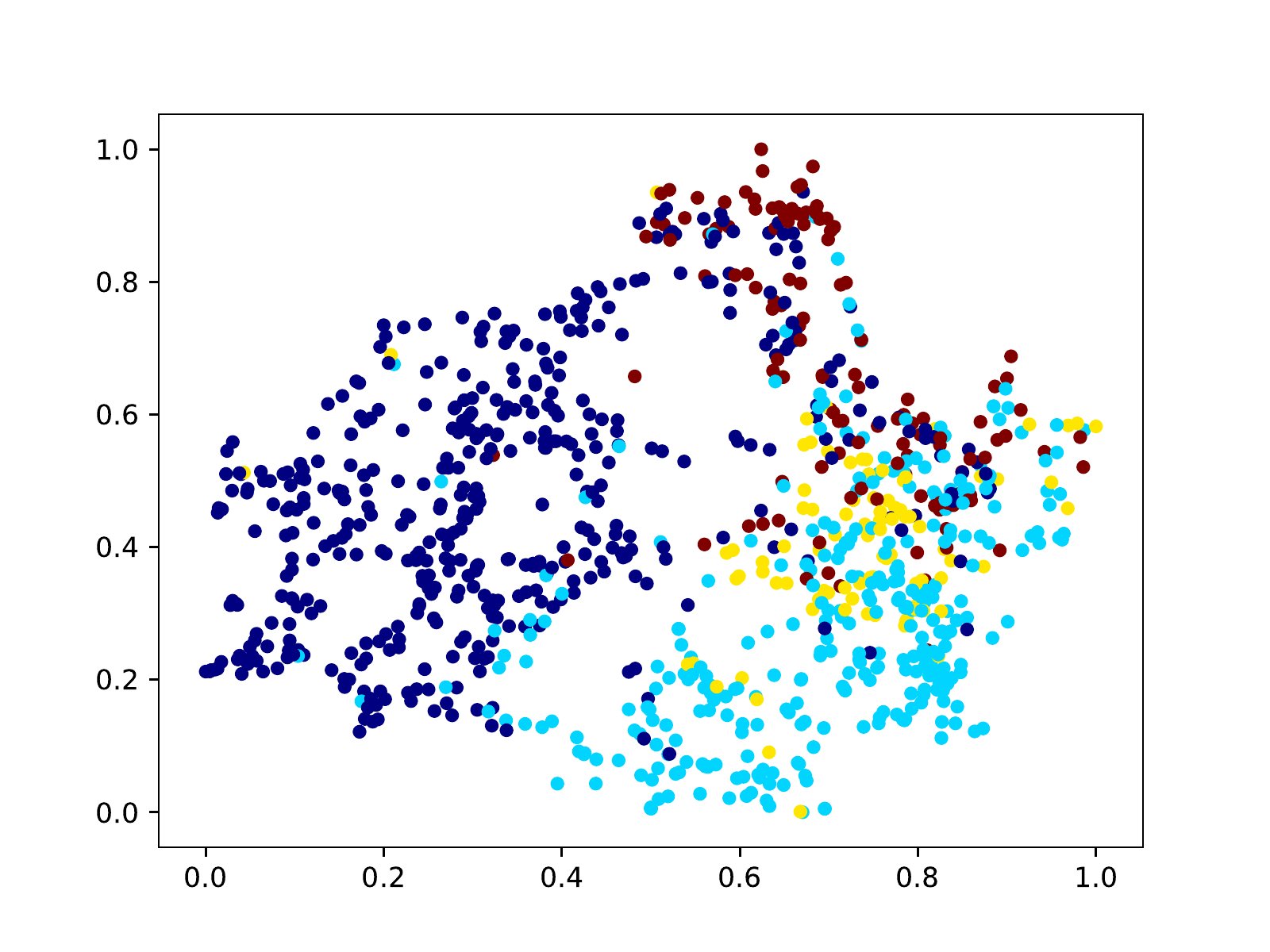}
		\label{fig:subfigure6}}
~
	\subfigure[DBLP-GCN]{%
		\includegraphics[ width=1.64in]{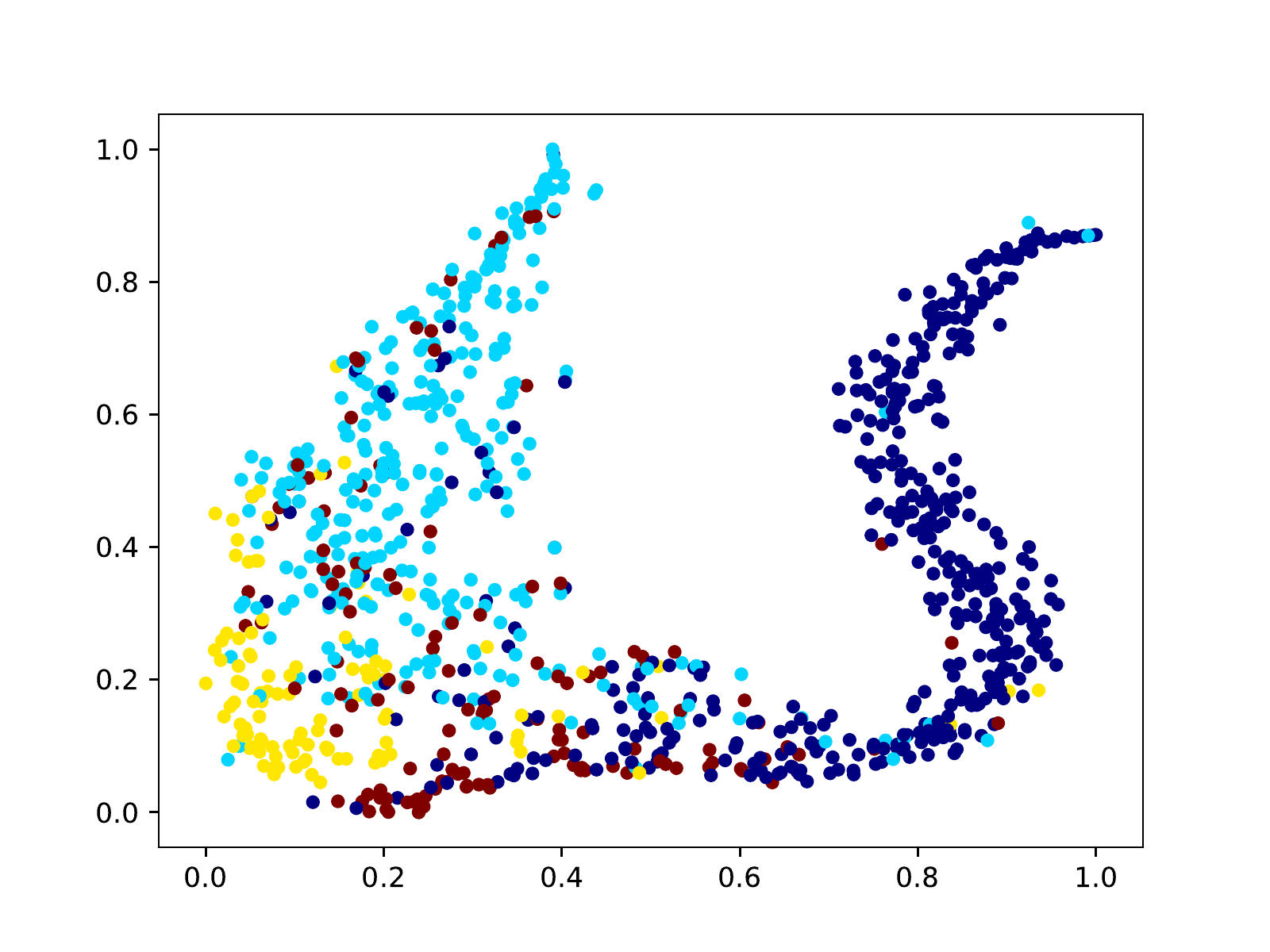}
		\label{fig:subfigure5}}
~
	\subfigure[DBLP-GraphSAGE]{%
		\includegraphics[ width=1.64in]{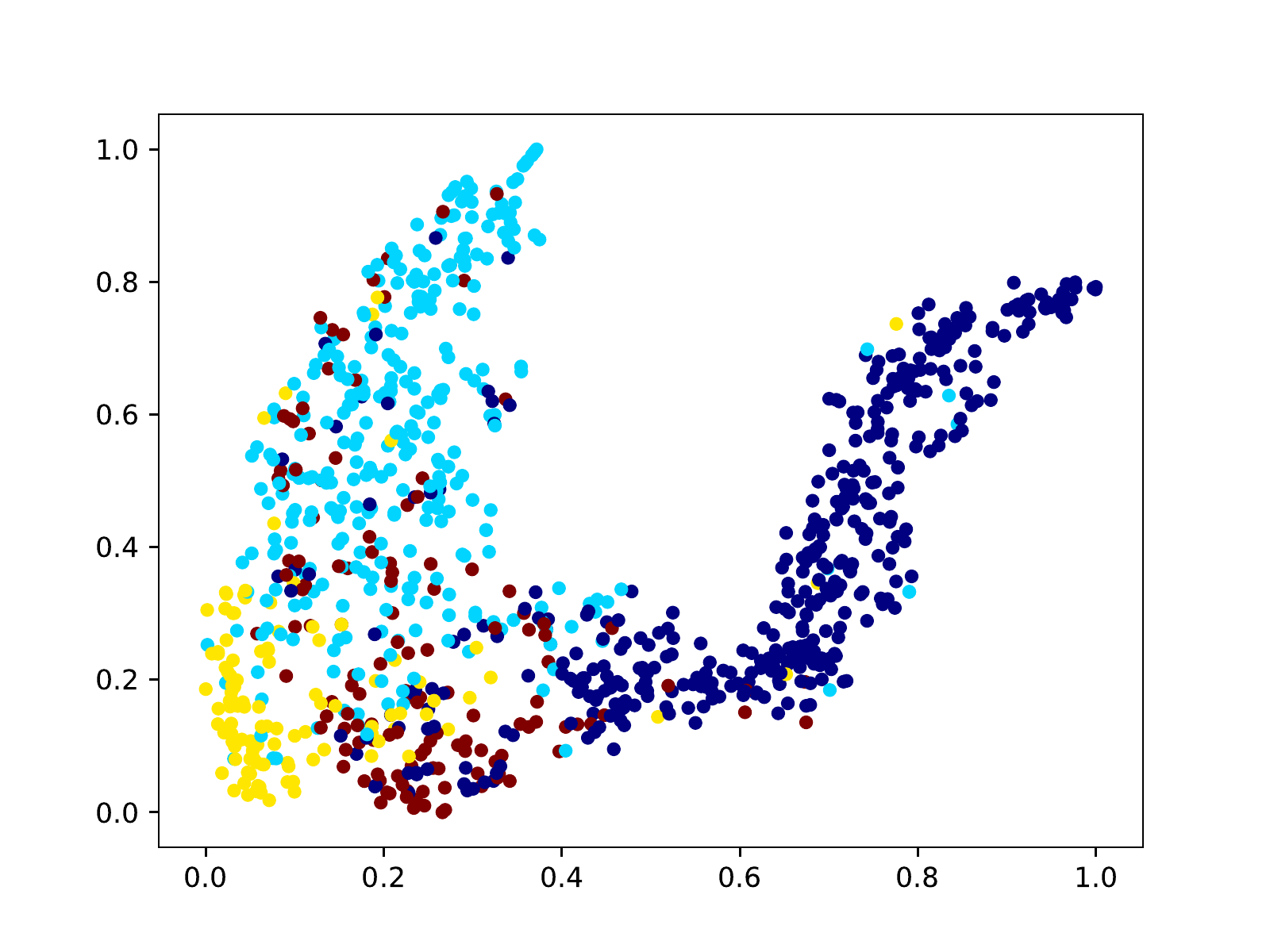}
		\label{fig:subfigure6}}
		
\caption{\label{fig:viz}Visualizations of the compared methods on Cora.}
\end{figure*}

\subsection{Performance Comparison}

In this section, we first visualize the node representations learned by different methods, followed by the performance comparison on node classification task. Additionally, we show the distributions of the selected signal neighbors with GDPNet on different dataset.

\subsubsection{Embedding Visualization}

Node representations are learned by GAT, GCN, GraphSAGE and GDPNet on test dataset of Cora, and visualized with t-SNE~\cite{maaten2008visualizing}, as shown in Fig.~\ref{fig:viz}. Different colors in the figure represent different categories in Cora. The following observations can be made from Fig.~\ref{fig:viz},
\begin{itemize}
	\item GDPNet correctly detects the classes in Cora, providing empirical evidence for the effectiveness of our method. This can be seen by the clear gap between samples with different colors. It also demonstrates that, removing the noisy neighbors can help nodes learn better representations.
	\item GCN and GraphSAGE share similar ``shape'' in the 2D space. The reason is that in the inductive learning setting, GCN and GraphSAGE use the same methods in neighborhood sampling. GAT considers the entire neighborhoods which leads to a different visualization result from the others. It can be seen that the sampled neighbors have a profound effect on the representations.
	\item GAT cannot effectively identify different classes as other methods, it might because it considers all the neighbors with attention weights, which is easily to introduce noisy neighbors.
\end{itemize}

\begin{table*}
\centering
\caption{\label{tab:classification}Summary of node classification results in terms of micro-averaged F1 score, for Cora, Citeseer, PubMed and DBLP}
\begin{tabular}{ccccc} \toprule
Method & Cora & PubMed&DBLP&Citeseer\\\midrule
LR	&$0.799 \pm 1.06\%$ &$0.871 \pm 0.82\%$&$0.784 \pm 1.03\%$&$0.813 \pm 0.58\%$\\
GAT	&$0.819\pm0.45\% $&$0.778\pm0.71\% $&$0.736\pm0.82\% $&$0.719\pm0.50\% $\\
GCN	&$0.838 \pm 0.50$\%&$0.826 \pm 0.22\%$&$0.805 \pm 2.17\%$&$0.829 \pm 1.56\%$\\
FastGCN	&$0.865\pm4.50\% $&$0.867\pm1.05\% $&$0.774\pm0.41\% $&$0.779\pm0.53\% $\\
GraphSAGE	&$0.867 \pm 0.52\%$& $0.854 \pm 0.87\%$ &$0.803 \pm 1.28\%$&$0.910\pm 0.73\%$\\\midrule
GDPNet$_\textit{RO}$& $0.879\pm 2.14\%$&$0.880 \pm 2.51\%$&$0.832 \pm 0.97\%$&$0.952 \pm 1.15\%$\\
GDPNet&$\mathbf{0.881\pm 0.31}\%$&$\mathbf{0.893 \pm 0.57\%}$&$\mathbf{0.836 \pm 0.57\%}$&$\mathbf{0.957 \pm 0.33\%}$\\
\bottomrule
\end{tabular}
\end{table*}


\begin{table*}
\centering
\caption{\label{tab:robust} Node classification results on original graph and the denoised graph by GDPNet, measured with micro-averaged F1 score}
\begin{tabular}{ccccc} \toprule
Method & Cora & PubMed&DBLP&Citeseer\\\midrule
GCN	&$0.838 \pm 0.50$\%&$0.826 \pm 0.22\%$&$0.805 \pm 2.17\%$&$0.829 \pm 1.56\%$\\
GCN$_\textit{GDPNet}$&$0.844\pm0.42\%$&$0.838\pm0.37\%$&$0.811 \pm 1.20\%$&$0.834 \pm 2.06\%$\\\midrule
GraphSAGE&$0.867 \pm 0.52\%$&$0.854 \pm 0.87\%$&$0.803 \pm 1.28\%$&$0.910\pm 0.73\%$\\
GraphSAGE$_\textit{GDPNet}$&$0.869\pm0.67\%$&$0.865\pm0.57\%$&$0.816 \pm 0.81\%$&$0.937 \pm 0.70\%$\\\bottomrule
\end{tabular}
\end{table*}

\begin{figure}
	\centering
	\includegraphics[width=8.5cm]{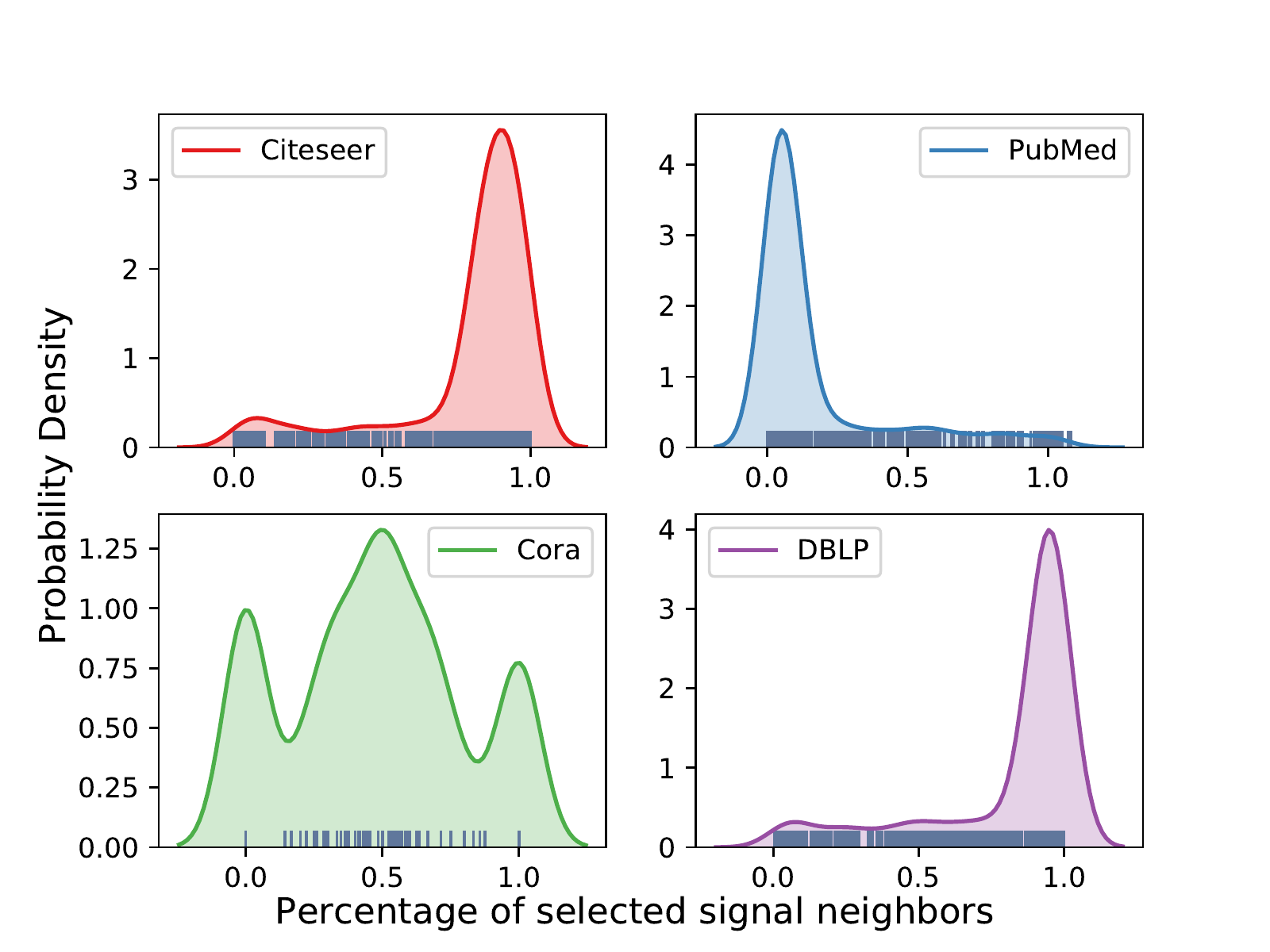}
	\caption{The distribution of the selected signal neighbor percentages.}
	\label{fig:distribution}
\end{figure}
\vspace{-0.5em}



\subsubsection{Results on Node Classification}

In this part, we compare the performance of GDPNet against the baselines on Cora, Citeseer, PubMed and DBLP. For all methods, we run the experiments with random seeds over $15$ trials and record the mean and standard variance of the \emph{micro-average F1} scores. The results are summarized in Table~\ref{tab:classification}. From the table we observe that,
\begin{itemize}
\item GDPNet consistently outperforms the other methods, which demonstrates there exists a set of noisy neighbors in each dataset on node classification task, and GDPNet can learn robust embeddings by effectively removing these noisy neighbors.
\item GCN, FastGCN and GraphSAGE show lower F1 scores. The reason is that these methods randomly sample a subset of neighbors for representation learning, which is hard to avoid the noisy neighbors. In addition, variance is higher via random sampling.
\item GAT learns the importance of the neighbors with attention weights, which is also sensitive to noisy data according to the reported results. 
\item Another interesting observation is that Logistic regression achieves better performance than the other baselines on PubMed, which indicates that there would be less signal neighbors for the nodes in PubMed. This observation can also be verified in Fig.\ref{fig:distribution}.
\item GDPNet$_\textit{RO}$ has a lower F1 score with higher variance than GDPNet, which demonstrates that the order of the decisions has an effect on the performance of representation learning. Thus learning an order for the neighbors is beneficial for selecting signal neighbors and robust graph representation learning.
\end{itemize}

\begin{figure}
\centering
\includegraphics[width=.495\linewidth]{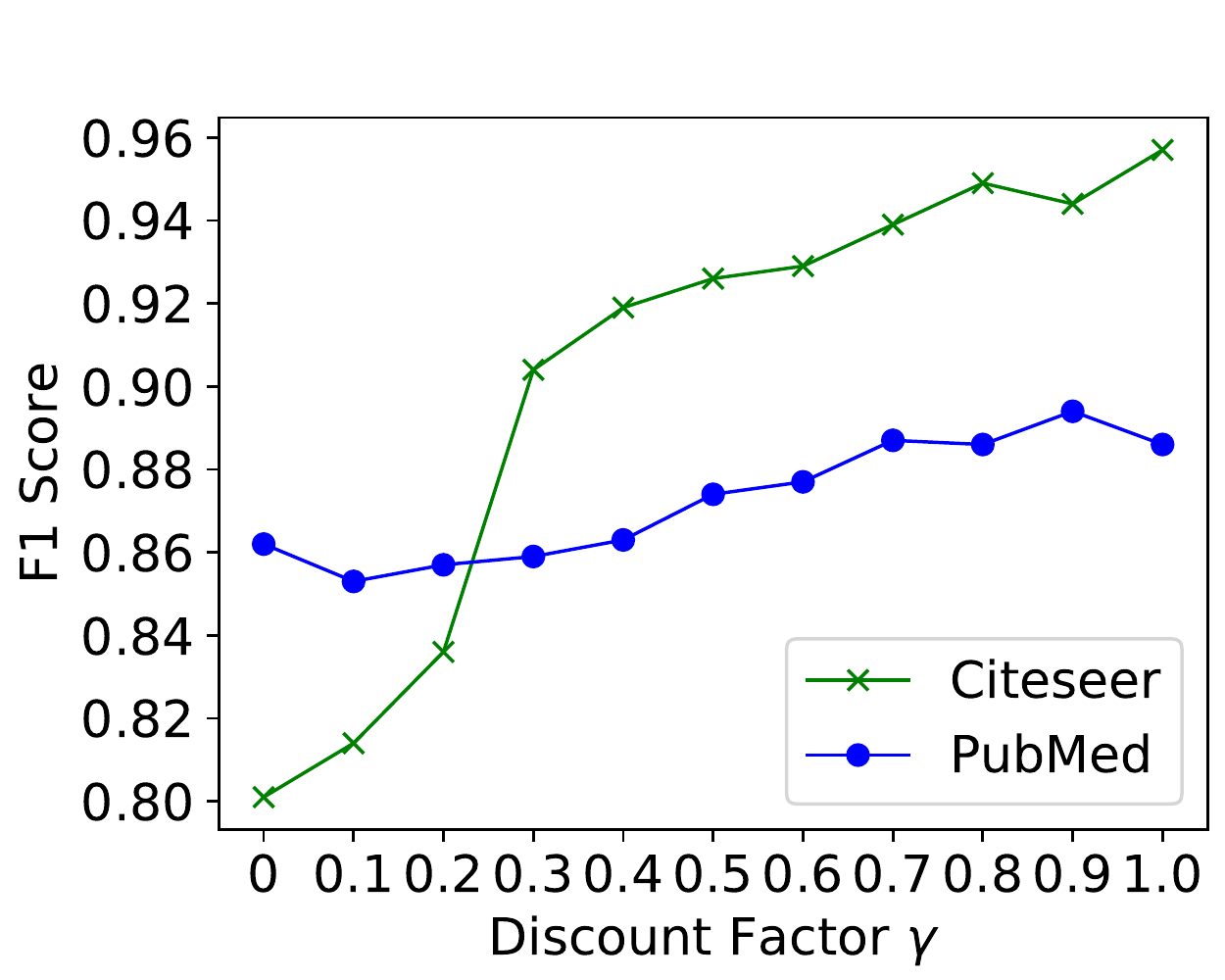}~
\includegraphics[width=.495\linewidth]{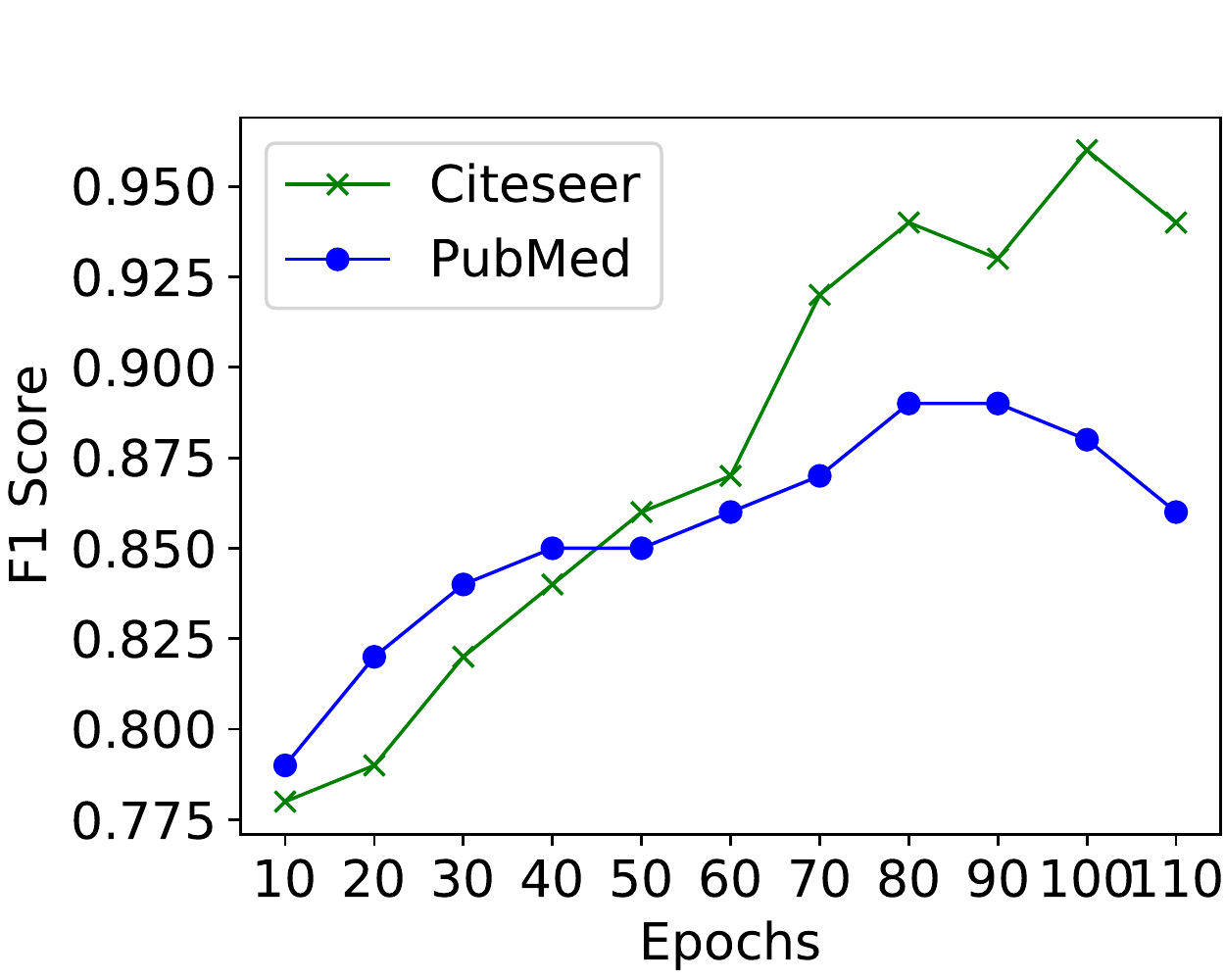}
\caption{Parameters analysis}
\label{fig:parameters}
\end{figure}

\begin{figure*}
  \centering
  \begin{minipage}[b]{0.66\textwidth}
\includegraphics[width=6.1cm]{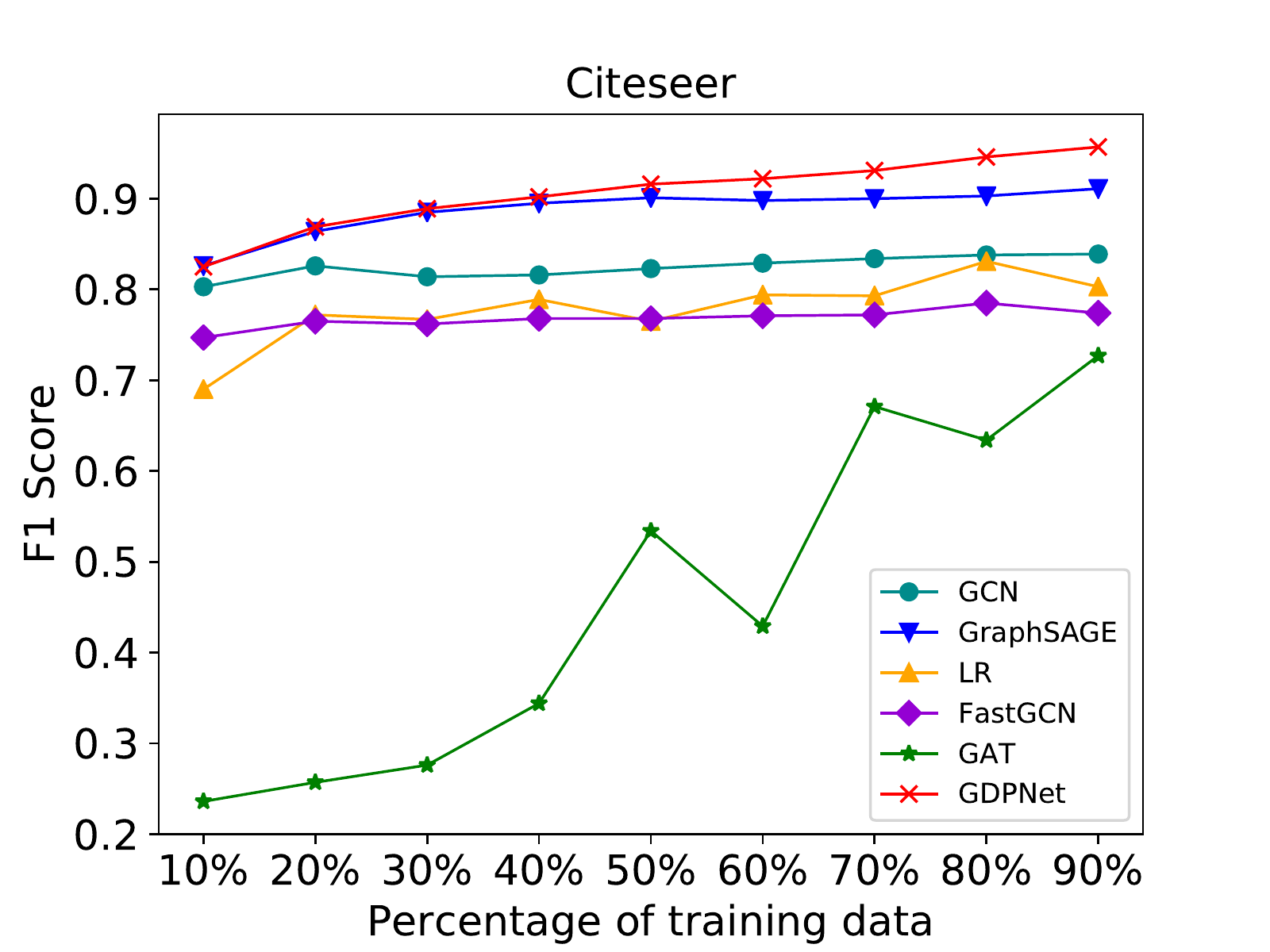}
\includegraphics[width=6.1cm]{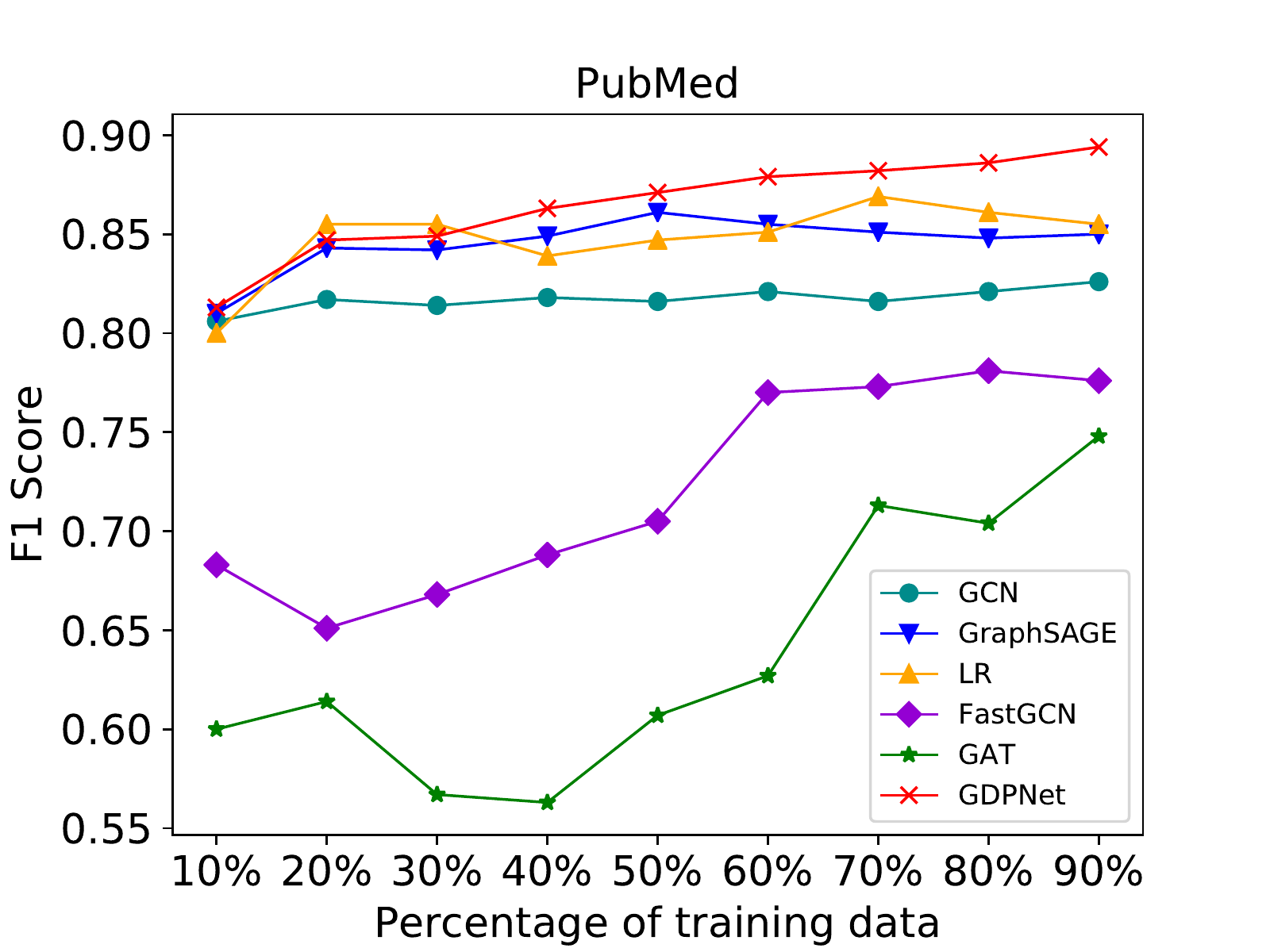}
\caption{Performance on different percentage of training data}
    \label{fig:percent}
  \end{minipage}
  \hfill
  \begin{minipage}[b]{0.33\textwidth}
\includegraphics[width=6.1cm]{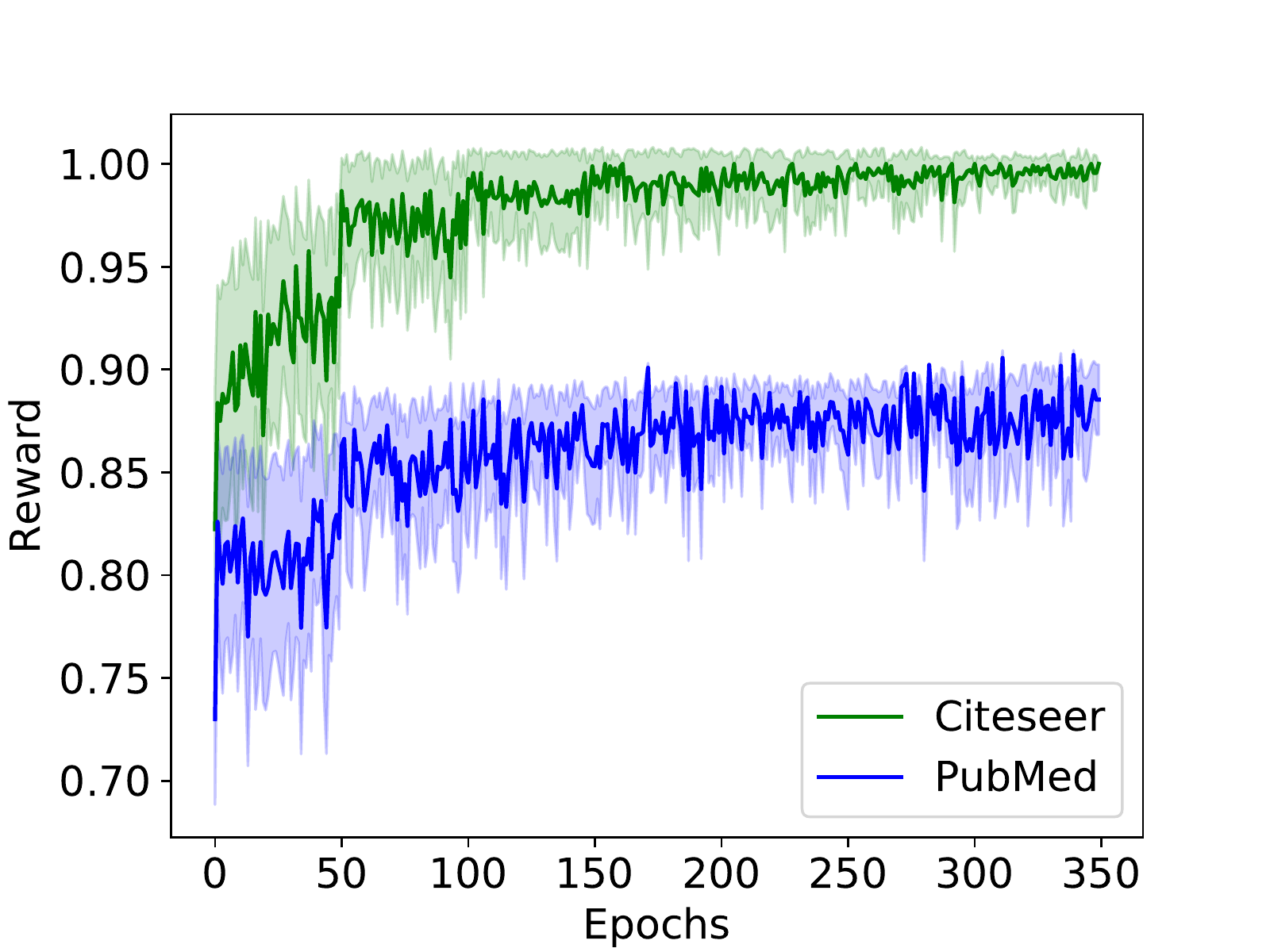}
    \caption{Convergence analysis}
    \label{fig:converge}
  \end{minipage}
\end{figure*}

\subsubsection{Distribution of the Selected Neighbors}

Fig.~\ref{fig:distribution} shows the distribution of the selected neighbor percentages, where the $x$-axis indicates the percentage of the nodes been selected as signal neighbors, and the $y$-axis indicates the probability densities. We observe that most of the neighbors in Citeseer and DBLP are selected while only a few neighbors are selected in PubMed. The results show that there would be more ``noisy'' citations (e.g. cross-field citation) in PubMed than in Citeseer and DBLP. Interestingly, most of the research papers collected in Citeseer and DBLP are from computer science, while PubMed collects papers from biomedical. 



\subsection{Ablation Study}
\subsubsection{Node classification performance comparison on selected signal neighbors}

In this part, we evaluate the effectiveness of denoising process in GDPNet. Specifically, we first utilize the policy learned by GDPNet to remove the noisy neighbors from Citeseer and PubMed. With the denoised graphs, we learn representations with GCN and GraphSAGE to see whether the performance can be improved on the denoised graphs. The results are summarized in Table~\ref{tab:robust}, where the suffix ``$_\textit{GDPNet}$'' indicates the results on the denoised graphs generated by GDPNet. As expected, both GCN and GraphSAGE achieves better performance on the denoised graphs, which demonstrates the effectiveness of the denoising process in GDPNet. 

\subsubsection{Parameter Sensitivity Study}

In Fig.~\ref{fig:percent}, we vary the training percentage of nodes in Citeseer and PubMed to test the classification accuracy. We observe that, the performance of all the methods are improved with the increases of the training percentage. Additionally, it can be seen that GAT is very sensitive to the percentages of training data, and it requires larger proportion of training data in order to have a desirable performance. GraphSAGE, GCN and GDPNet achieve good performances on small training data, and GDPNet make more improvements as the training data percentage increases.


Discount factor $\gamma$ balances the importance between instant reward and long-term reward. The large $\gamma$ indicates the more important role of long-term reward. Fig. 5 shows that when $\gamma = 1.0$, Citeseer achieves the best performance, while PubMed achieves best performance when $\gamma = 0.9$. We can see that Citeseer is more sensitive to the discount factor than PubMed.

Fig.~\ref{fig:parameters} presents the analysis on the number of epochs for representation learning phase. It can be seen from the figure that, with the increase of epochs (between $10$ and $80$), the performances of PubMed and Citeseer are both improved. The epochs to achieve best performance are $80$ and $100$ for PubMed and Citeseer, respectively.

In Fig.~\ref{fig:percent}, we vary the training percentage of nodes in Citeseer and PubMed to test the classification accuracy. We observe that, the performance of all the methods are improved with the increases of the training percentage. Additionally, it can be seen that GAT is very sensitive to the percentages of training data, and it requires larger proportion of training data in order to have a desirable performance. GraphSAGE, GCN and GDPNet achieve good performances on small training data, and GDPNet make more improvements as the training data percentage increases.

\subsubsection{Convergence Analysis}

Fig.~\ref{fig:converge} shows the convergence analysis of GDPNet on Citeseer and PubMed. We initialize the policy randomly when epoch equals $0$, and the neighbors are randomly selected as signal neighbors. We observe that Citeseer converges faster than PubMed. One explanation would be that PubMed has more nodes than Citeseer, which requires more time to explore the policy for nodes.

\section{Related Work}\label{sec:related}

In this section, we briefly describe previous graph representation learning approaches including matrix factorization based methods and graph neural network based methods, and recent advancements in applying reinforcement learning on graph.

\subsection{Graph Representation Learning}
Graph representation learning tries to encode the graph structure information into vector representations. The main idea is to learn a mapping function from the nodes or entire graphs into an embedding space where the geometric relationships in the low-dimensional space coincide with the original graph. The methods can be grouped into two categories: matrix factorization based methods and graph neural network based methods~\cite{hamilton2017representation}.

\subsubsection{Matrix Factorization based Embedding}

Matrix factorization based methods learns an embedding look-up table which trains unique embedding vectors for each node independently.  These methods largely focused on matrix-factorization approaches and random walk approaches~\cite{hamilton2017representation,goyal2018graph,cai2018comprehensive,cui2018survey}. Matrix-factorization approaches utilize dimension reduction methodology to learn the representations~\cite{cao2015grarep,ahmed2013distributed,ou2016asymmetric}  with the loss of node pair similarity. Inspired by the success of natural language processing~\cite{mikolov2013distributed}, a set of methods use random walk to learn the node embeddings where the node similarity is calculated by co-occurrence statistics from sentence-like vertex sequences generated by random walks among connected vertices~\cite{perozzi2014deepwalk,yang2015network,grover2016node2vec,tang2015line,dong2017metapath2vec}. The random-walk based method have been verified to be unified into the matrix factorization framework~\cite{qiu2018network}.

\subsubsection{Graph Neural Network based Embedding}
A set of graph neural network based embedding methods are proposed recently for representation learning~\cite{bruna2013spectral,duvenaud2015convolutional,li2015gated,niepert2016learning}. GCN~\cite{kipf2016semi} first proposes the first-order graph convolution layer to perform recursive neighborhood
aggregation based on the local connection.  Instead of utilizing full graph Laplacian during training in the GCN, GraphSAGE~\cite{hamilton2017inductive} considers the inductive setting to handle the large scale graph with batch training and neighborhood sampling. Followed by GraphSAGE, self-attention mechanism has been explored to enhance the representation learning performance~\cite{velivckovic2017graph,zhang2018gaan}. To accelerate the training of GCNs, ~\cite{chen2018fastgcn} samples the nodes in each layer independently, while~\cite{huang2018adaptive} samples the lower layer conditioned on the top one and the sampled neighborhoods are shared by different parent node. In this work, we propose to find an effective subset of neighbors for learning robust representations.

\subsection{Reinforcement Learning on Graph}

Reinforcement learning solves the sequential decision making problem with the goal of maximizing cumulative rewards of these decisions. A set of work used reinforcement learning to solve the sequential decision making problems in graph, such as minimum vertex cover, maximum cut and travelling salesman problem~\cite{daihan,bello2016neural}. You et al.~\cite{you2018graph} considered the molecular graph generation process as a sequential decision making process where the reward function is designed by non-differentiable rules. Dai et al.~\cite{dai2018adversarial} utilized reinforcement learning to learn an attack policy to make multiple decisions (delete or add edges in the graph) to attack the graph. 
\section{Conclusion}\label{sec:con}

In this paper, we developed a novel framework, GDPNet, to learn robust representations from noisy graph data through reinforcement learning. GDPNet includes two phases: \emph{signal neighbor selection} and \emph{representation learning}. It learns a policy to sequentially select the \emph{signal} neighbors for each node, and then aggregates the information from the selected neighbors to learn node representations for the down-stream tasks. These two learning phases are complementary and achieves significant improvement. 
We show that our method mathematically is equivalent to maximizing the submodular function with the carefully designed reward function, which guarantees our objective value can be bounded by $(1-\frac{1}{e}) R(\N(v)^*)$. Note that GDPNet is naturally an \emph{inductive} model which can generate representations for unseen nodes. 
Experiments on a set of well-studied datasets provide empirical evidence for our analytical results, and yield significant gains in performance over state-of-the-art baselines. 


\bibliographystyle{ieeetr}
\bibliography{references}
\end{document}